
\documentclass[preprint,12pt,authoryear]{elsarticle}




\usepackage{amssymb}

\usepackage{microtype}

\usepackage{amsmath}
\usepackage{amssymb}
\usepackage{makecell}
\usepackage{multirow}
\usepackage{array}
\usepackage{ragged2e}
\usepackage[english]{babel}
\usepackage{amsthm}
\usepackage{stmaryrd}
\usepackage[T1]{fontenc}
\usepackage{graphicx}
\usepackage[utf8]{inputenc}
\usepackage{qtree}
\usepackage{ragged2e}
\usepackage{algorithm2e}
\usepackage{kpfonts}
\usepackage{amsfonts}
\usepackage{comment}
\usepackage{longtable}
\usepackage{algpseudocode}
\usepackage{multibib}
\usepackage{anyfontsize}
\usepackage{underscore}
\usepackage{here}
\usepackage[unicode,psdextra]{hyperref}
\pdfstringdefDisableCommands{%
  \def\({Math:[}%
  \def\){]}%
}
\usepackage{cleveref}

\newcounter{claimctr}
\setcounter{claimctr}{0}
\newenvironment{case}[1]{\par\refstepcounter{claimctr}\noindent\underline{Case \arabic{claimctr}}.\space#1}{}

\newtheorem{definition}{Definition}

\newtheorem*{note}{Note}
\newtheorem{theorem}{Theorem}
\newtheorem*{theorem*}{Theorem}
\newtheorem{lemma}{Lemma}

\newtheorem{observation}{Observation}
\newtheoremstyle{instruction}
{3\topsep}
{3\topsep}
{}
{}
{\bfseries\itshape}
{:}
{\newline}
{}
\theoremstyle{instruction}
\newtheorem{instruction}{Instruction}

\journal{Theoretical Computer Science}

\begin{document}

\begin{frontmatter}



\title{Minimalist Grammar: Construction without Overgeneration}

\author[inst1]{Isidor Konrad Maier}

\affiliation[inst1]{organization={BTU Cottbus-Senftenberg, Chair of Communication Engineering},
            addressline={Siemens-Halske-Ring 14},
            city={Cottbus},
            postcode={DE-03046},
            }

\author[inst1]{Johannes Kuhn}
\author[inst2]{Jesse Beisegel}
\author[inst1]{Markus Huber-Liebl}
\author[inst1]{Matthias Wolff}

\affiliation[inst2]{organization={BTU Cottbus-Senftenberg, Department of Engineering Mathematics and Numerics of Optimization},
            addressline={Platz der Deutschen Einheit 1},
            city={Cottbus},
            postcode={DE-03046},
            }

\begin{abstract}
  In this paper we give instructions on how to write a minimalist grammar (MG). In order to present the instructions as an algorithm, we use a variant of context free grammars (CFG) as an input format. We can exclude overgeneration, if the CFG has no recursion, i.e. no non-terminal can (indirectly) derive to a right-hand side containing itself.
  The constructed MGs utilize licensors/-ees as a special way of exception handling. A CFG format for a derivation $A\_eats\_B\mapsto^* peter\_eats\_apples$, where $A$ and $B$ generate noun phrases, normally leads to overgeneration, e.\,g., $i\_eats\_apples$. In order to avoid overgeneration, a CFG would need many non-terminal symbols and rules, that mainly produce the same word, just to handle exceptions. In our MGs however, we can summarize CFG rules that produce the same word in one item and handle exceptions by a proper distribution of licensees/-ors.
  The difficulty with this technique is that in most generations the majority of licensees/-ors is not needed, but still has to be triggered somehow. 
  We solve this problem with $\epsilon$-items called \emph{adapters}.
\end{abstract}

\begin{graphicalabstract}
\includegraphics[trim = {0cm 0cm 0cm 0cm}, clip, scale = 0.7]{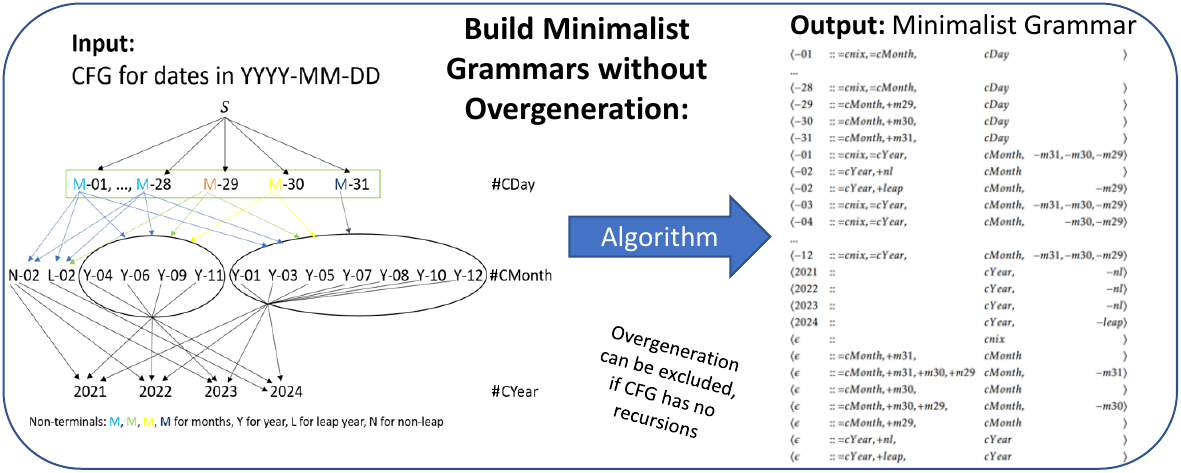}
\end{graphicalabstract}

\begin{highlights}
\item An algorithm is presented to transform a context-free grammar into a Minimalist Grammar.
\item Context-free grammars without recursion can be transformed into a Minimalist Grammar without overgeneration
\item Minimalist Grammars can work with a smaller lexicon than CFGs as they can unite several rules generating the same word to one item. The multiple functionality of the word for several rules is reimplemented with "adapter items".
\item The Python implemented algorithm can be found in \cite{implementation}
\end{highlights}

\begin{keyword}
Minimalist Grammar \sep Language Generation \sep Overgeneration
\PACS 07.05.Bx 
\MSC 68Q42 
\end{keyword}

\end{frontmatter}


\section{State of the Art}
Minimalist grammars (MG) were developed by~\cite{stabler97} to mathematically codify Chomsky’s Minimalist Program (\citeyear{Chomsky1995}).
Beim~\cite{RLMNG} present ideas for constructing small MGs of numeral words.

We have extended these ideas to handcraft larger grammars of numerals and expressions for date and time of day in different languages, see \cite{maier2022minimalist} and \cite{maier-EtAl:2022:EURALI}. In the handcrafted grammars we 
evaded to use several lexical items with the same non-empty exponents. Instead, each non-empty morpheme should not have more than one lexical item. This way, we kept the lexicon small and made lexical parsing easier. Based on the experiences gained from handcrafting these handy grammars, we have developed a general algorithm.

Lately, we got the idea of using a format similar to context-free grammars as the input to transform into an MG.

In \cite{MaierEUNICE}, we have presented an algorithm to decompose numeral words into context-free derivation trees.
It can deliver us a lot of input material to build MGs out of.

About the relation between CFGs and MGs it is well-known that MGs without remnant move are context free, see \cite{kobele2010}.

\cite{amblard2011minimalist} showed that constructions in MGs can be represented as trees and gave further insights into the complexity of MGs.

Regarding general rules of MG modelling, in~\cite{graf2017} there are ideas on how constraints can be represented by features in minimalist syntax. In~\cite{kobele2021} decompositions are modelled with respect to a generalized version of MGs. Some of the most significant present attempts for MG construction can be found in~\cite{indurkhya2019} and~\cite{ermolaeva2020induction}. While Indurkhya's implementation allows overgeneration, Ermolaeva's ideas can overcome this issue by using several lexical items for one word. In~\cite{englischimplementing} another program is offered for computational implementation of Minimalist Syntax.

\cite{stabler-2011-top} showed how to turn an MG into a multiple context free grammar based on theory of \cite{meryMCFG2006}, \cite{Michaelis2001OnFP} and \cite{harkema}. 

We intend to avoid overgeneration without duplication of items. However, it is argued that this task could be moved from the lexica over to the interfaces, see \cite{graf2017,graf2012movement,kobele2011minimalist,kobele2014meeting}.

For the development of minimalist grammars there is a software environment in LKB-style (Linguistic Knowledge Builder), see~\cite{herring2016grammar}. For further information on LKB see~\cite{copestake2002implementing}.

Regarding the question, how small one should sensibly deconstruct morphemes for an MG model, the payoff is analyzed in~\cite{ermolaeva-2021-deconstructing}.

\section{Prerequisites}
In this section we give a deeper introduction into the required terms by providing definitions and some background.

\subsection{Minimalist Grammars}\label{stablermgs}
Minimalist grammars (MGs) are especially well-suited for modelling natural human language, see \cite{Torr2019a,Stabler2013,Fowlie2017,Versley2016,Stanojevic2018}. Within the Chomsky hierarchy their languages are strictly located between the context-free and context-sensitive languages (cf.~\cite{JaegerRogers2012F}, especially Figure 6). Minimalist languages are equivalent to multiple context-free languages and linear context-free rewriting languages (cf.~\cite{Stabler2011AGBT}) as well as mildly context-sensitive languages (cf.~\cite{JaegerRogers2012F}). As those, they establish an infinite hierarchy strictly between context-free and context-sensitive languages (cf.~\cite{Vijay-ShankerWeir1994})

A minimalist grammar consists of a lexicon of items, which store an exponent and syntactic features, and the structure-building functions ``merge'' and ``move''.
The structure-building rules are written in the format
\begin{align*}
    \text{Rule} \frac{Input1\ (Input2)}{Output}.
\end{align*}
We use a notation originating from \cite{STABLER2003345} and write it like in Section II.B of beim~\cite{RLMNG}. There, exponent and syntactic part are given inside a bracket, separated by either `$::$' or `$:$'. $\langle \text{exponent}::\text{syntax}\rangle$ means that the item  is an original item from the lexicon and $\langle \text{exponent}:\text{syntax}\rangle$ means that the item is a derived item. The separator `$\cdot$' is used as a variable placeholder for either `$::$' or `$:$'. The structure-building functions can also be non-final, meaning that they do not (finally) merge the input items but just connect them to a chain.

In the following notation of the structure-building rules, the ${\bf q}$s denote possible parts of a chain, that may be empty. The ${\bf t}$s denote non-empty lists of features.

Selectors like ${=}f$ trigger a merge with an item of category $f$ and licensors like $+f$ trigger a move with a non-finally merged item that holds a licensee $-f$.

\begin{equation*}
\begin{aligned}
&\text{merge-1}
  &&\dfrac{
    \langle e_1 :: {=}f {\bf t}\rangle \quad
    \langle e_2 \cdot\ f\rangle, {\bf q}
  }{
    \langle e_1e_2 : {\bf t}\rangle, {\bf q}
  }&&\text{(final)}\\
&\text{merge-2}
  &&\dfrac{
    \langle e_1 : {=}f {\bf t}\rangle, {\bf q_1} \quad
    \langle e_2 \cdot\ f\rangle, {\bf q_2}
  }{
    \langle e_2e_1 : {\bf t}\rangle, {\bf q_1, q_2}
  }&&\text{(final)}\\
&\text{merge-3}
  &&\dfrac{
    \langle e_1\cdot\ {=}f {\bf t_1}\rangle, {\bf q_1} \quad
    \langle e_2 \cdot\ f {\bf t_2}\rangle, {\bf q_2}
  }{
    \langle e_1 : t_1\rangle, {\bf q_1},
    \langle e_2: t_2\rangle, {\bf q_2}
  }&&\text{(non-final)}\\
&\text{move-1}
  &&\dfrac{
    \langle e_1 : +f {\bf t}\rangle, {\bf q_1},
    \langle e_2 : -f \rangle, {\bf q_2}
  }{
    \langle e_2e_1 : {\bf t}\rangle, {\bf q_1, q_2}
  }&&\text{(final)}\\
&\text{move-2}
  &&\dfrac{
    \langle e_1 : +f {\bf t_1}\rangle, {\bf q_1},
    \langle e_2 : -f {\bf t_2}\rangle, {\bf q_2}
  }{
    \langle e_1 : {\bf t_1}\rangle, {\bf q_1},
    \langle e_2 : {\bf t_2}\rangle, {\bf q_2}
  }&&\text{(non-final)}
\end{aligned}
\end{equation*}
The left items of the inputs are the active items that ``trigger'' the operation, while the right items are the passive. We call them merger/mover and mergee/movee respectively and we use wordings like: \\
``Merger $F$ merges (with) a mergee $X$'' /
``Mover $F$ moves a movee $X$'' \\
``Mergee $X$ is/gets merged by a merger $F$'' /
``Movee $X$ is/gets moved by a mover $F$''\\
At times we simply identify mergers/movers\slash movees with the symbols selector/licensor/licensee.

The five operation rules themselves are no alternatives to each other, but it is entirely determined by (the feature lists of) the input which operation is triggered, see \cite{amblard2011minimalist}. This makes it possible to simply present structures as graphs.

We may refer to MG items simply by their exponent, if the exponent unambiguously belongs to one specific item in a MG in context.

\subsection{Define the input: Context-Free Grammars with Categories}
As we intend to present an algorithm that produces MGs, we need to define an input. We chose ``CFGs with categories", as it is similar to a well-known format that also matches our intuition of the needed input well.

Our definition is based on the underlying definition of CFGs in \cite{HopcroftUllman1979IATLC}.
We suppose that CFGs with categories offer two advantages in that
\begin{itemize}
    \item they can be generated well with the numeral decomposer presented in \cite{MaierEUNICE} and
    \item we regard them as a rather intuitive structure of slot-filling.
\end{itemize}

We present the needed definitions and give some context to each.

\begin{definition}[Context-Free Grammar with Categories]
A context-free grammar with categories is a context-free grammar in which each derivation rule $A\mapsto \alpha$ holds a category $\mathcal{C}$ as another parameter. We will write the rules in the shape
\begin{align*}
    A\mapsto \alpha ~\# \mathcal{C}
\end{align*}
$A$ is called the (producing) non-terminal (NT) or producer or input of the rule, $\alpha$ the word or output of the rule and $\mathcal{C}$ the category of the rule. $\alpha$ is a string out of terminals and non-terminals.
Sometimes we may omit the category, if it is needless to mention in the present context.

When writing out the word of a rule, we consistently use lower-case letters for terminals and upper-case for non-terminals. Lower-case Hellenic letters are used for unspecified words consisting of terminals and non-terminals. We may use lower-case Latin letters, if the word contains only terminals. The empty string is denoted by $\epsilon$.
\end{definition}

In the eventually produced MG, each rule will be represented by an item storing
\begin{itemize}
    \item the rule's word,
    \item the rule's category as the most important syntactic feature
    \item and potentially some more syntactic features.
\end{itemize}

When intending to convert a CFG to a MG using our instructions, one will need to assign the rules to categories, so the MG item of a CFG rule's word gains the category of the rule. In order to implement the top category $\mathcal{C}Fin$, we need to apply our first instruction.
\begin{instruction}
    Add rule $S_0\mapsto S ~\# \mathcal{C}Fin$ to the CFG and reset the start symbol to $S_0$
\end{instruction}
We recommend using categories that refer to the meaning or grammatical property of the rule's word. For example ($A \mapsto$ eat) may be category $\mathcal{C}Action$ or $\mathcal{C}Verb$ and ($B \mapsto$ apples) may be category $\mathcal{C}Noun$ or $\mathcal{C}NounPlural$ or $\mathcal{C}Food$. Grammar-based distributions will facilitate making the MG's language grammatically correct, while meaning-based distribution will rather facilitate excluding semantically undesired expressions like 'apples eat apples'.

A trivial way to add categories, would be to give each rule $A\mapsto \alpha$ a category $\mathcal{C}A$ that simply repeats the rule's NT. However, in this case the MG format can rarely offer advantages over the CFG, especially it cannot work on a smaller lexicon. A major advantage of the MG format is that it can comprise rules $A\mapsto \alpha$ and $B\mapsto \alpha$ into one item for $\alpha$ if they get the same category. Using NTs as categories would eliminate this advantage.

\begin{definition}[Free/Restricted Non-Terminals]\label{def-free}
Given $A$ is a NT in a CFG with categories, $A$ is restricted if there are rules $B\mapsto \beta ~\# \mathcal{C}$ and $A\mapsto \alpha ~\# \mathcal{C}$ of one category with $B\neq A$ but there is no $A\mapsto \beta ~\# \mathcal{C}$. Otherwise, $A$ is a free NT.
\end{definition}

In the MG format, NTs are represented by selectors that target a specific category. If a NT $A$ is free, then each word of the targeted category can be derived from $A$. So, the spot that $A$ holds as a placeholder has free access for all words of the targeted category. Otherwise, if $A$ would be restricted, then according to the definition there would be a rule $B\mapsto \beta$ that produces a word in a target category that $A$ cannot produce. In this case, the access to the spot that $A$ holds as a placeholder, would need to to be restricted, so that $b$ cannot get there despite belonging to a target category.

\begin{definition}[Recursion Free and Category-Recursion Free]
A CFG with categories is recursion-free, if for any NT $A$ and any series of at least one derivation $A\mapsto \alpha_1\mapsto \alpha_2\mapsto\ldots \alpha_n$, the NT $A$ cannot appear in $\alpha_n$. If additionally $\alpha_n$ cannot contain a NT symbol producing a rule of the same category as $A\mapsto \alpha_1$, then the CFG is category-recursion free.
\end{definition}

The property category-recursion free is significant, because we can prove that MGs built out of category-recursion free CFGs do not overgenerate, e.g. they do not generate expressions apart from those that the input CFG does.
Any recursion-free CFG can be made category-recursion free by favorable distribution of categories. All it needs is a sufficient separation of the categories, where any pair of rules that violates the condition is separated.
\begin{note}
The relation $\succ$ of categories defined as
\begin{equation}\label{catorder}
\begin{aligned}
    \mathcal{C}\ \succ\ \mathcal{C}' :\Leftrightarrow \\ \exists\text{ rules } (A\mapsto \alpha ~\# \mathcal{C}), (A'\mapsto \alpha' ~\# \mathcal{C}'): \\ \exists\text{ derivation }A\mapsto \alpha\mapsto \ldots\mapsto \omega\text{ with }A'\text{ in } \omega
\end{aligned}
\end{equation}
is a strict partial order if the overlying CFG is category-recursion free.
\end{note}

\section{Basic Models for Single CFG rules}
In this section we show how a derivation of one CFG word can be transformed into a derivation of MG items in a vacuum, i.e. we ignore for now that the grammar may have more items that interfere with the construction. A word - or output of a rule - can generally be represented as $s_0X_1s_1X_2\ldots X_ns_n$, where the $s_i$ are strings of terminals and $X_j$ are (free or restricted) non-terminal symbols. We show how the rules $X_j\mapsto x_j$ with a string $x_j$ of terminals for $j=1,\ldots,n$ can be applied on the word.

We explain the modelling in three steps:
\begin{enumerate}
    \item The 3 basic shapes of a word, where the MG model is the simplest
    \item Resolve the issue of a NT producing words in several categories
    \item How to model a word of general shape as a combination of the basic shapes
\end{enumerate}
General notes on the MGs:
\begin{itemize}
    \item Categories whose names end in an apostrophe, as well as categories with an exponent are auxiliary categories. Auxiliary categories are singletons, i.\,e., they only contain one item.
    \item Some of the (input) items in the grammars are written with a `$\cdot$' instead of a `::'. It means that they could already be derived items, since usually a preceding adaption of the feature list is necessary before modelling the derivation from a NT. The adaption procedure is described in Section 4. \\As derived items cannot be listed in a lexicon, the following lists of items should technically be seen as workspaces rather than as grammars.
    \item The workspaces also do not contain any item of category $\mathcal{C}Fin$, which implies that the given trees are not only incomplete at the bottom, but also at the top. They are just coutouts of a derivation tree.
    \item It follows from the structure rules that the given items of each workspace cannot be composed differently than intended. Moreover, the distribution of licensees/-ors makes sure that a word can be derived from a restricted NT if and only if it holds the required licensee.
\end{itemize}

\subsection{The word shapes easiest to model by MGs}\label{basic-wordshapes}
In this subsection we present three different MG workspaces that can generate the expression $peter\_eats\_apples$.

In the trivial case where the expression is a single word without composition, the MG item would be $\langle peter\_eats\_apples :: \mathcal{C}Sentence\rangle$. \\Next we present the easy to model words that actually contain NTs.
\begin{case}
    Right-free: We can model the derivation
    \begin{align*}
        peter\_eats\_F \mapsto^* peter\_eats\_apples
    \end{align*}
    by creating items $\langle peter\_eats\_ :: {=}\mathcal{C}Food, \mathcal{C}Sentence\rangle$ and $\langle apples :: \mathcal{C}Food\rangle$. The selector ${=}\mathcal{C}Food$ can trigger a merge1 with any item of the shape $\langle \ldots \cdot \mathcal{C}Food\rangle$, which would merge the mergee to the right-hand side:
    \begin{align}\label{merge:rf}
    \text{merge-1}&&
  \dfrac{
    \langle peter\_eats\_ :: {=}\mathcal{C}Food, \mathcal{C}Sentence\rangle \quad
    \langle apples :: \mathcal{C}Food\rangle
  }{
    \langle peter\_eats\_apples : \mathcal{C}Sentence\rangle
  }
    \end{align}
    Merge-2 would only be triggered, if the merger already was a derived item. Merge-3 would be triggered, if the mergee had licensors left in its feature list.
\end{case}
\begin{case}
    Two-handed free: We can model the derivation
    \begin{align*}
        P\_eats\_F \mapsto^* peter\_eats\_apples
    \end{align*}
    by creating items $\langle \_eats\_ :: {=}\mathcal{C}Food, {=}\mathcal{C}Person, \mathcal{C}Sentence\rangle$, $\langle apples :: \mathcal{C}Food\rangle$ and $\langle peter :: \mathcal{C}Person\rangle$.
    First, the selector ${=}\mathcal{C}Food$ can trigger a merge1 with any item of the shape $\langle \ldots \cdot \mathcal{C}Food\rangle$, which would merge the mergee to the right-hand side:

    \begin{align}\label{merge:2hf1}
        \text{merge-1}
  &&\dfrac{
    \langle \_eats\_ :: {=}\mathcal{C}Food, {=}\mathcal{C}Person, \mathcal{C}Sentence\rangle \quad
    \langle apples :: \mathcal{C}Food\rangle
  }{
    \langle \_eats\_apples : {=}\mathcal{C}Person, \mathcal{C}Sentence\rangle
  }
    \end{align}
    Then the selector $=\mathcal{C}Person$ can trigger another final merge with any item of the shape $\langle \ldots \cdot \mathcal{C}Person\rangle$. As the merger is a derived item then, a merge-2 is triggered, which merges the mergee to the left-hand side:
    \begin{align}\label{merge:2hf2}
        \text{merge-2}
  &&\dfrac{
    \langle \_eats\_apples : {=}\mathcal{C}Person, \mathcal{C}Sentence\rangle \quad
    \langle peter :: \mathcal{C}Person\rangle
  }{
    \langle peter\_eats\_apples : \mathcal{C}Sentence\rangle
  }
    \end{align}
\end{case}
\begin{case}
    Left-restricted: We can model the derivation
    \begin{align*}
        N\_eats\_apples \mapsto^* peter\_eats\_apples
    \end{align*}
    by creating items $\langle \_eats\_apples :: {=}\mathcal{C}Noun, +person, \mathcal{C}Sentence\rangle$ and $\langle peter :: \mathcal{C}Noun, -person\rangle$.\footnote{The item for $peter$ has been intentionally modelled different here than in the two-handed free case, since we want to model $N$ as a restricted NT.} The selector ${=}\mathcal{C}Noun$ can trigger a merge with any item that has its category $\mathcal{C}Noun$ as its foremost feature. However, if the mergee has no licensor $-person$ that matches the mergers $+person$, then the construction meets a dead end.
    This way the licensee $+person$ indirectly forbids the merge with some items of category $\mathcal{C}Noun$, so unsensible expressions like $stone\_eats\_apples$ can be avoided.
    If $\langle\_eats\_apples :: {=}\mathcal{C}Noun, +person, \mathcal{C}Sentence\rangle$ performs a merge with an item of the shape $\langle \ldots \cdot \mathcal{C}Noun, -person\rangle$, it is a non-final merge-3, as the mergee has licensors left in its feature list:
    {\fontsize{10.0}{12.0}\selectfont
    \begin{align}\label{merge:lr}
        \text{merge-3}
  &&\dfrac{
    \langle\_eats\_apples :: {=}\mathcal{C}Noun, +person, \mathcal{C}Sentence\rangle \quad
    \langle peter :: \mathcal{C}Noun, -person\rangle
  }{
    \langle \_eats\_apples : +person, \mathcal{C}Sentence\rangle,
    \langle peter : -person\rangle
  }
    \end{align}
    }
    Then $+person$ triggers a final move-1, because $-person$ is the only licensor left in the movees feature list. move1 moves the movee to the left-hand side:
    \begin{align}\label{move:lr}
        \text{move-1}
  &&\dfrac{
    \langle \_eats\_apples : +person, \mathcal{C}Sentence\rangle,
    \langle peter : -person\rangle
  }{
    \langle peter\_eats\_apples : \mathcal{C}Sentence\rangle
  }
    \end{align}
\end{case}

\subsection{Non-Terminals with Several Target Categories}
In the previous subsection we presented the most basic MG models of CFG words. However, in each case an NT could only be derived into words of one specific target category.

But what if we have rules $A\mapsto \alpha ~\# \mathcal{C}$ and $A\mapsto \beta ~\# \mathcal{C}'$?
Or generally, if $\mathcal{C}_1,\ldots,\mathcal{C}_n$ is the set of categories, for which a rule $A\mapsto \alpha_i ~\# \mathcal{C}_i$ of the NT $A$ exists?

We resolve the issue, by choosing a main target category $\mathcal{C}_m$. Then, for $i\in \{1,\ldots,n\}\setminus\{m\}$, introduce $A_i$ as a new auxiliary NT and replace each rule $A\mapsto \alpha_i ~\# \mathcal{C}_i$ by two rules $A\mapsto A_i ~\# \mathcal{C}_m$ and $A_i\mapsto \alpha_i ~\# \mathcal{C}_i$.
This way, $A$ has one unique target category $\mathcal{C}_m$ and the new auxiliary NTs $A_i$ also have only one unique target category $\mathcal{C}_i$.

As an example, consider the following snippet of a CFG for numerals:
\begin{align*}
    &\ldots &&\mapsto hundredandA &&~\# \mathcal{C}3\\
    &A &&\mapsto thirty-B &&~\# \mathcal{C}2\\
    &A &&\mapsto two &&~\# \mathcal{C}1
\end{align*}
Here, the numerals are categorized after their number of digits. Since $hundredA$ allows 1-digit numerals as well as 2-digit numerals as a suffix, the NT $A$ targets both category $\mathcal{C}2$ and $\mathcal{C}1$. In order to resolve this issue, $\mathcal{C}2$ can be chosen as the the main target category of $A$. Then the rule $A \mapsto two ~\# \mathcal{C}1$ would have to be replaced by two rules $A\mapsto A_1 ~\# \mathcal{C}2$ and $A_1 \mapsto two ~\# \mathcal{C}1$.

Note that such newly added rules $A\mapsto A_i ~\# \mathcal{C}_m$ generally have an impact on the relation $\prec$ defined in Equation~\ref{catorder}.
$\mathcal{C}_m$ should be chosen "maximal" with respect to $\prec$, i.e. so that $\mathcal{C}_m\nprec\mathcal{C}_i$ for $i=1,\ldots,n$. Then $\prec$ keeps being a strict partial order, if it was before.

In the present example, if we consider that there is another rule \\$B\mapsto~two~\#\mathcal{C}1$, then $\mathcal{C}1$ should not be chosen as the main target category of $A$. In this case, $A\mapsto A_1 ~\# \mathcal{C}1$, $A_1 \mapsto thirty-B ~\# \mathcal{C}2$ and $B \mapsto two ~\# \mathcal{C}1$ would form a category recursion and $\prec$ would no longer be a strict order.

In contrast, $\mathcal{C}2$ would be proper choice for the main target category, since the rule $B \mapsto two ~\# \mathcal{C}1$ with $B$ originating from the word $thirty-B$ of category $\mathcal{C}2$ means that $\mathcal{C}1\prec \mathcal{C}2$.

Is such proper choice always possible? Yes, unless $\prec$ was not strict before anyway. Each strict partial order $\prec$ is a subset of a total order $\prec'$, so the "maximum" can be chosen with respect to some fixed total order $\prec'$.
Here we can continue our instructions:
\newpage
\begin{instruction}
    Normalize each NT's target category:
    \begin{itemize}
        \item[] For each NT $A$:
        \begin{itemize}
            \item[] If $A$ has rules of more than one category:
            \begin{itemize}
                \item[] Choose main category $\mathcal{C}_m$ so that neither of the other categories have a word containing a NT that has a rule of $\mathcal{C}_m$.
                \item[] For all other categories $\mathcal{C}_i$ of $A$:
                \begin{itemize}
                    \item[] introduce a new auxiliary NT $A_i$ and add a rule $A\mapsto A_i ~\# \mathcal{C}_m$
                    \item[] replace all rules $A\mapsto \ldots ~\# \mathcal{C}_i$ by $A_i\mapsto \ldots \#\mathcal{C}_i$
                \end{itemize}
            \end{itemize}
        \end{itemize}
    \end{itemize}
\end{instruction}
When the target category of each NT is normalized, one can assess which NT is free and which is restricted.
\begin{instruction}
    Classify all NTs as free or restricted with respect to Definition~\ref{def-free}.
\end{instruction}
\subsection{MG model of general word shape}
Before the complicated general case we explain two tricks separately in edge cases.

\begin{enumerate}
    \item Right restricted: We have seen in section 3.1 that that the MG items
    \begin{align*}
        \langle peter\_eats\_ :: {=}\mathcal{C}Noun, +food, \mathcal{C}Sentence\rangle\\\text{ and }\langle apples :: \mathcal{C}Noun, -food\rangle
    \end{align*}
    would produce $applespeter\_eats\_$, rather than $peter\_eats\_apples$. So, there is no straight way to model the CFG word $peter\_eats\_N$ with a restricted $N$. Instead, the rule $A\mapsto peter\_eats\_N ~\# \mathcal{C}Noun$ can be decomposed into $A\mapsto peter\_eats\_N' ~\# \mathcal{C}Noun'$ and $N'\mapsto N ~\# \mathcal{C}N$. Then the word $N$ can be interpreted as left restricted (as $N\epsilon$), and $peter\_eats\_N'$ is right free, as the auxiliary category $\mathcal{C}N'$ is new and unique to $N'$.
    In MG format the construction is then:

    \begin{figure}[H]
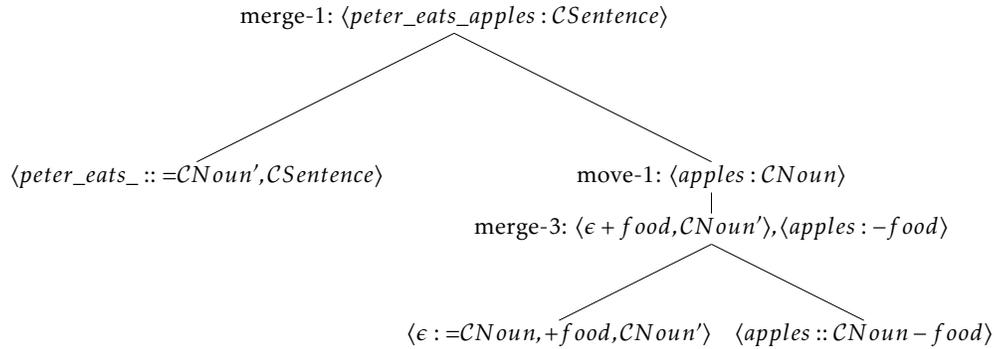

    \centering
    \scalebox{0.75}{
    \Tree [.{merge-1: $\langle peter\_eats\_apples : \mathcal{C}Sentence\rangle$}     [.{$\langle peter\_eats\_ :: {=}\mathcal{C}Noun', \mathcal{C}Sentence\rangle$} ][.{move-1: $\langle apples : \mathcal{C}Noun\rangle$} [.{merge-3: $\langle \epsilon +food,\mathcal{C}Noun'\rangle,\langle apples : -food\rangle$} [.{$\langle \epsilon : {=}\mathcal{C}Noun, +food, \mathcal{C}Noun'\rangle$} ][.{$\langle apples :: \mathcal{C}Noun -food\rangle$} ] ] ] ]
    }\caption{Derivation model for a right restricted NT}
    \end{figure}
    \item Left free: We have seen in Section 3.1 that the MG items
    \begin{align*}
        \langle\_eats\_apples :: {=}\mathcal{C}Person\rangle\\\text{ and }\langle peter :: \mathcal{C}Person\rangle
    \end{align*}
    would produce $\_eats\_applesPeter$, rather than $Peter\_eats\_apples$. As $\langle\_eats\_apples :: {=}\mathcal{C}Person\rangle$ is an original lexical item, ${=}\mathcal{C}Person$ triggers a merge1 that merges $peter$ to the right-hand side. However, we can remodel the CFG word $P\_eats\_apples$ to a two-handed free $P\_eats\_applesO$, where $O$ can perform a neutral derivation with the rule $O\mapsto \epsilon ~\# \mathcal{C}nix$.\footnote{'nix' means 'nothing' in a variant of Deutsch (German)}
    Then the MG construction would work:

    \begin{figure}[H]
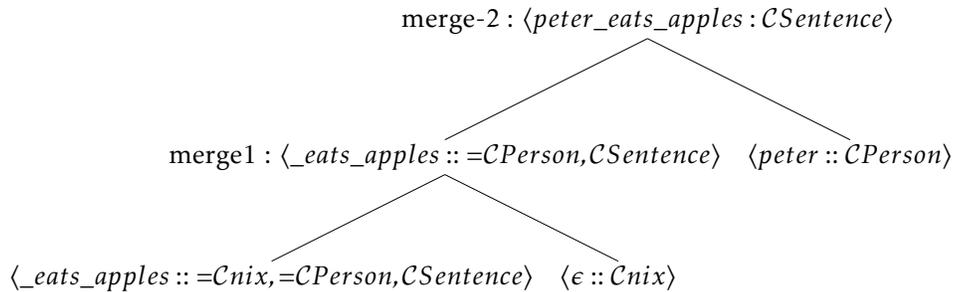

    \centering
    \scalebox{0.85}{
    \Tree [.{merge-2 : $\langle peter\_eats\_apples : \mathcal{C}Sentence\rangle$}     [.{merge1 : $\langle \_eats\_apples :: {=}\mathcal{C}Person, \mathcal{C}Sentence\rangle$} [.{$\langle \_eats\_apples :: {=}\mathcal{C}nix, {=}\mathcal{C}Person, \mathcal{C}Sentence\rangle$} ][.{$\langle \epsilon :: \mathcal{C}nix \rangle$} ] ][.{$\langle peter :: \mathcal{C}Person\rangle$} ] ] }
    \caption{Derivation model for a left free NT}
    \end{figure}
\end{enumerate}
As seen in the edge cases, the way is to introduce auxiliary rules with auxiliary NTs. The auxiliary NTs are always free as they target their own unique auxiliary category. By using these auxiliary rules, a word can be decomposed into several words, so that each of them is either right free or left restricted or 2-handed free.

Now we consider a general rule $A\mapsto s_0X_1s_1X_2\ldots X_ns_n ~\# \mathcal{C}$, where the $s_i$ are strings of terminals and the $X_j$ are (free or restricted) NTs that target the category $\mathcal{C}_j$.
\begin{instruction}
Decompose each rule
    \begin{align*}
        A\mapsto s_0X_1s_1X_2\ldots X_ns_n ~\# \mathcal{C}
    \end{align*}
    into several rules
    \begin{align*}
    &A&\mapsto s_0&A': &&&&&&&&\mathcal{C}\\
    &&&            A'&\mapsto X_1s_1&A'': &&&&&&\mathcal{C}'\\
    &&&&&                            \vdots\\
    &&&&&                            A^{(i)}&\mapsto X_is_i&A^{(i+1)}: &&&&\mathcal{C}^{(i)}\\
    &&&&&&&                                                 \vdots\\
    &&&&&&&                                                 A^{(n)}&&\mapsto X_ns_n: &&\mathcal{C}^{(n)},
    \end{align*}
    under usage of the following exceptions:
    \begin{itemize}
        \item If $s_0$ is $\epsilon$, summarize the first two rules to $A\mapsto X_1s_1A''$.
        \item If $s_n$ is $\epsilon$ and $X_n$ is free, summarize the last two rules  to $A^{(n-1)}\mapsto X_{n-1}s_{n-1}X_n: \mathcal{C}^{(n-1)}$.
        \item For $i=1,\ldots,n-1$, if $X_i$ is restricted, replace the rule $A^{(i)}\mapsto X_is_iA^{(i+1)}: \mathcal{C}^{(i+1)}$ by two rules $A^{(i)}\mapsto X'_is_iA^{(i+1)}: \mathcal{C}^{(i+1)}$ and $X'_i\mapsto X_i: \mathcal{C}'_i$.
        \item If $X_n$ is free and $s_n$ is not $\epsilon$, then replace $A^{(n)}\mapsto X_ns_n$ by two rules $A^{(n)}\mapsto X_ns_nO: \mathcal{C}^{(n)}$ and $O\mapsto \epsilon: \mathcal{C}nix$.
    \end{itemize}
\end{instruction}
The last two exceptions utilize the right restricted and left free edge cases respectively. Funnily enough, these "edge cases" only appear at the edges of word in general, so the wording makes sense in several ways.

After following this instruction, all rules of the CFG are either right free or left restricted or 2-handed free or entirely terminal.
\begin{instruction}
    For each right free word $wordX$ of category $\mathcal{C}$ create a MG item
    \begin{align*}
        \langle word :: {=}\mathcal{C}(X), \mathcal{C}\rangle.
    \end{align*}
    For each 2-handed free word $YwordX$ of category $\mathcal{C}$ create a MG item
    \begin{align*}
        \langle word :: {=}\mathcal{C}(X), {=}\mathcal{C}(Y), \mathcal{C}\rangle.
    \end{align*}
    For each left restricted word $Yword$ of category $\mathcal{C}$ create a MG item
    \begin{align*}
        \langle word :: {=}\mathcal{C}(Y), +a_Y, \mathcal{C}\rangle.
    \end{align*}
    For each terminal word $word$ of category $\mathcal{C}$ create a MG item
    \begin{align*}
        \langle word :: \mathcal{C}\rangle.
    \end{align*}
\end{instruction}
\begin{definition}[Word Item]
    All MG items introduced in Instruction 5 are called word items, as they represent a (part of a) CFG word.
\end{definition}
\begin{instruction}\label{distr-licensee}
    Give to each MG item $\langle word :: \ldots \mathcal{C}\rangle$ - which represents the CFG word $wordX$, $YwordX$, $Yword$ or $word$ in category $\mathcal{C}$ - one licensee $-a_R$ for each restricted NT $R$ that generates its CFG word.
\end{instruction}


\section{From Basic Models to Complete MGs}
In Section 3, we have shown how to MG model a direct derivation of one CFG word. In the presented minimal examples, many obstacles do not appear yet.\\
In a real application however, there are several words and many NTs and each word may be derivable from several NTs. Possibly also from several restricted NTs, in which case by Instruction~\ref{distr-licensee} it would hold several licensees that could interfere with each other.

The problem and the evolution of the solution is explained in the following example:\\
The Deutsch\footnote{Endonym for German} numeral `vier' can be extended to larger numerals like `vierzehn', `vierzig' or `vierundzwanzig'. We start modelling this with the following items:
\begin{align*}
&\langle \text{zig} :: {=}\mathcal{C}1, +zi, \mathcal{C}2\rangle\\
&\langle \text{zehn} :: {=}\mathcal{C}1, +zeh, \mathcal{C}2\rangle\\
&\langle \text{undzwanzig} :: {=}\mathcal{C}1, +un, \mathcal{C}2\rangle\\
&\langle \text{vier} :: \mathcal{C}1, -zi, -zeh, -un\rangle
\end{align*}
This way `vier' has a proper licensee for either extension.\\
However, the present items are not sufficient for proper constructions, which we demonstrate in the following elucidating cases:
\begin{itemize}
    \item In order to construct `vierzehn' we would merge `zehn' and `vier'. As `vier' has features left in its feature list apart from its category, a non-final merge-3 is triggered:
        \begin{flalign*}
        &\text{merge-3:}\ \dfrac{
            \langle \text{zehn} :: {=}\mathcal{C}1, +zeh, \mathcal{C}2\rangle \quad
            \langle \text{vier} :: \mathcal{C}1, -zi, -zeh, -un\rangle
          }{
           \langle \text{zehn} : +zeh, \mathcal{C}2\rangle,
            \langle \text{vier} : -zi, -zeh, -un\rangle
          }
        \end{flalign*}
        Then we would meet a dead end, where the merged items cannot perform a $zeh$-triggered move before at least the unneeded licensee $-zi$ is removed. \\
        When and how to remove $-zi$? The next case demonstrates, whether such unneeded licensees should be removed before or after the merge.
    \item For constructing `vierzig' we would merge `zig' and `vier' - again non-finally:
        \begin{flalign*}
        &\text{merge-3:}\ \dfrac{
            \langle \text{zig} :: {=}\mathcal{C}1, +zi, \mathcal{C}2\rangle \quad
            \langle \text{vier} :: \mathcal{C}1, -zi, -zeh, -un\rangle
          }{
           \langle \text{zig} : +zi, \mathcal{C}2\rangle,
            \langle \text{vier} : -zi, -zeh, -un\rangle
          }
        \end{flalign*}
        Now a move would be triggered as intended, but since `vier' has further licensees left in its feature list it would lead to a non-final move:
        \begin{flalign*}
        &\text{move-2:}\ \dfrac{
           \langle \text{zig} : +zi, \mathcal{C}2\rangle,
            \langle \text{vier} : -zi, -zeh, -un\rangle
          }{
           \langle \text{zig} : \mathcal{C}2\rangle,
            \langle \text{vier} : -zeh, -un\rangle
          }
        \end{flalign*}
        As already seen in the first example, there are licensees left to be triggered.
        Another problem is that `zig' and `vier' are still not merged finally. Since all the features - that are responsible for the correct final merge - are triggered already, a correct construction can no longer be guaranteed.\\
        Regarding the question, whether the leftover licensee should be triggered before or after the merge: If we added items that allow to remove $-zeh$ and $-un$ after the merge, we could still not guarantee that they are removed before the $zi$-triggered move. Then we still could not exclude wrong constructions. So, we rather remove unneeded features before the merge.
\end{itemize}
\textbf{Conclusion: Before a word item is merged by another, all its unneeded features have to be removed.}

In the next subsection, we explain how to remove unneeded features.
\subsection{Adapters}\label{shaping}
We call the removal of features 'adapting'. Adapting is done with adapter items. An adapter item holds a selector ${=}\mathcal{C}$ of its own category $\mathcal{C}$ and one or more licensors. This way it can merge with another item of its category $\mathcal{C}$ and trigger some of its licensees, while the category of the construction remains unchanged. Since adapter items can only adapt the items of their own category specifically, we are able to influence that items of CFG words have to be adapted before they get merged by another word's item.
The most basic way of adapting is to remove the foremost licensee $-a$ of an item of category $\mathcal{C}$ by merging with $\langle \epsilon :: {=}\mathcal{C}, +a, \mathcal{C}\rangle$ and then moving $a$-triggered. These items are called remove adapters.
\begin{instruction}
For each licensee $-a$ that appears in some item of category $\mathcal{C}$ in your lexicon, add a remove adapter $\langle\epsilon :: {=}\mathcal{C}, +a, \mathcal{C}\rangle$.
\end{instruction}

The required remove adapters for the present example are:
\begin{align*}
&\langle \epsilon1 :: {=}\mathcal{C}1, +zi, \mathcal{C}1\rangle\\
&\langle \epsilon2 :: {=}\mathcal{C}1, +zeh, \mathcal{C}1\rangle\\
&\langle \epsilon3 :: {=}\mathcal{C}1, +un, \mathcal{C}1\rangle
\end{align*}
With these remove adapters we can generate `vierundzwanzig' in the present grammar, see Figure~\ref{fig:vierundzwanzig}.
\begin{figure*}[t]
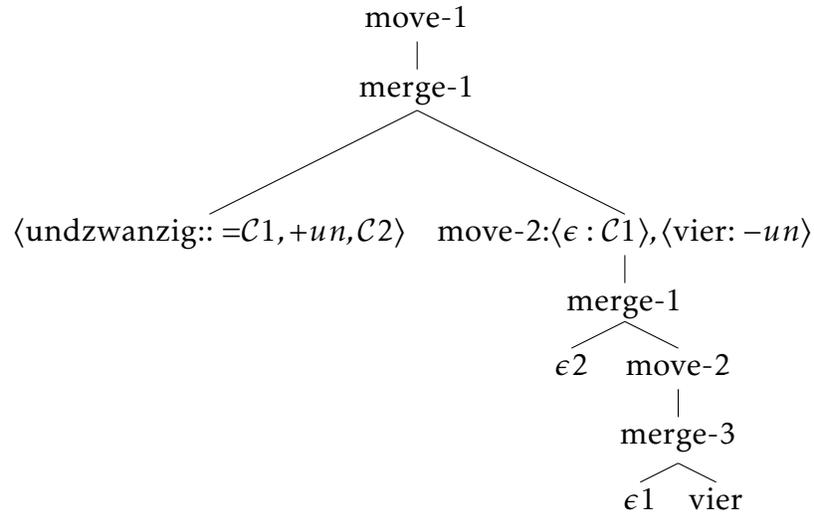

\Tree [.{move-1} [.{merge-1} [.{$\langle$undzwanzig$ :: {=}\mathcal{C}1,+un,\mathcal{C}2\rangle$} ][.{move-2:$\langle\epsilon : \mathcal{C}1\rangle,\langle$vier$ : -un\rangle$} [.{merge-1}
[.{$\epsilon2$} ][.{move-2} [.{merge-3} [.{$\epsilon1$} ][.{vier} ] ] ] ] ] ] ]
\caption{Derivation tree of `vierundzwanzig'. Since the needed licensee is the last in the feature list, `vier' can be adapted just with remove adapters. This special case causes that `vier' is merged-1 by `undzwanzig', instead of a merge-3 as in the examples in Section 3. The outcome is the same, though.}
\label{fig:vierundzwanzig}
\end{figure*}
\\
If we instead wanted to generate `vierzig' or `vierzehn', then we would need to keep a licensee - $-zi$ or $-zeh$ respectively - while removing other licensees further back - $-zeh$ and $-un$ or only $-un$ respectively. We use items called select adapters for that, which remove all licensees except for the foremost. \\
For the present example we need the following select adapters:\footnote{A select adapter $\langle\epsilon :: {=}\mathcal{C}1, +un, c1, -un\rangle$ for $-un$ is not needed, as can be seen in Figure~\ref{fig:vierundzwanzig}. However, it can be helpful to introduce one to make the constructions more uniform and thus easier adaptable for e.\,g.\ the addition of semantics.}
\begin{align*}
&\langle \epsilon :: {=}\mathcal{C}1, +zi, +zeh, +un, \mathcal{C}1, -zi\rangle\\
&\langle \epsilon :: {=}\mathcal{C}1, +zeh, +un, \mathcal{C}1, -zeh\rangle
\end{align*}
With the select adapters added we can bring `vier' into any shape in which it keeps none or one arbitrary of its licensees---except for the last, whose case is covered in Figure~\ref{fig:vierundzwanzig}---by following this procedure:
\begin{enumerate}
    \item As long as the foremost licensee of the word item is unneeded:\\
    Remove it by merging and moving with the related remove adapter.
    \item Then: If there are still unneeded licensees in the feature list: Merge with the related select adapter and perform moves until all its licensors are triggered.
\end{enumerate}
An example for a typical adapting procedure is given in Figure~\ref{fig:vier}.
\begin{figure*}[t]
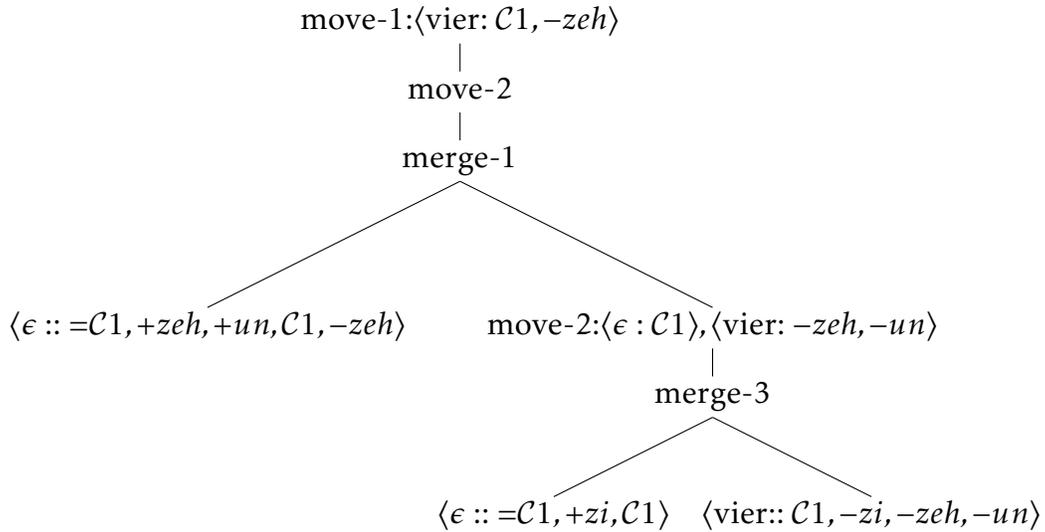

\Tree [.{move-1:$\langle $vier$ : \mathcal{C}1,-zeh\rangle$} [.{move-2} [.{merge-1} [.{$\langle\epsilon :: {=}\mathcal{C}1,+zeh,+un,\mathcal{C}1,-zeh\rangle$} ][.{move-2:$\langle\epsilon : \mathcal{C}1\rangle,\langle $vier$ : -zeh,-un\rangle$} [.{merge-3} [.{$\langle\epsilon :: {=}\mathcal{C}1,+zi,\mathcal{C}1\rangle$} ][.{$\langle $vier$ :: \mathcal{C}1,-zi,-zeh,-un\rangle$} ] ] ] ] ] ]
\caption{`vier' gets adapted in such a way that all licensees except $-zeh$ are removed, so it can construct `vierzehn'. First, $-zi$ is removed with a remove adapter. Then the select adapter for $-zeh$ removes $-un$ while keeping $-zeh$.}
\label{fig:vier}
\end{figure*}
Thus, `vier' can flexibly turn into any shape where it holds one or none of its original licensees. \\
Any adapting procedure that is finished by a final move-1 - such as in Figure~\ref{fig:vier} - gives the word item the adapted shape that is called for in the workspaces presented in Section 3. A final move occurs, when the backmost licensee of the word item is triggered. This happens in every adapting procedure, unless it is the backmost licensee that is needed to model the derivation from a restricted NT. This is showcased in Figure~\ref{fig:vierundzwanzig}. There, `vier' turns into the adapted shape $\langle\epsilon : \mathcal{C}1\rangle,\langle$vier$ : -un\rangle$. Then the first item of the chain has no more features apart from its category, so it is merged-1 finally instead of a non-final merge-3. 
After that however, constructions are the same as those showcased in Section 3.\\
\begin{instruction}\label{instr-select}
For each licensee $-x$ and each non-empty sequence $-t$ of licensees that appears behind $-x$ in the feature list of some item of some category $\mathcal{C}$, add a select adapter $\langle\epsilon::{=}\mathcal{C},+x,+t,\mathcal{C},-x\rangle$, where $+t$ denotes the sequence of the licensors corresponding to the licensees in $-t$.
\end{instruction}
Once select adapters are added, we have a grammar that allows each word item to use any single of its licensees or none of them during its derivation. 

So the desired MG would be finished.

However, there is an efficiency problem evolving from Instruction 8 which we discuss in the next subsection.
\subsection{A Sorting Problem for Efficiency}
In this subsection, we discuss a complexity issue that occurs during Instruction~\ref{instr-select}. The issue is not an essential part of this paper, but rather an outlook on how the algorithm could be optimized towards building smaller MG lexica.

Instruction~\ref{instr-select} can blow up the lexicon with select adapters for different orders of licensees%
, since the number of combinations tends to grow quickly. \\
Also, many of these select adapters would be restricted to a single use case, since a select adapter $\langle \epsilon :: {=}\mathcal{C}, +x, +y, +z, \mathcal{C}, -x\rangle$ only works for items whose feature list end in $\dots,-x, -y, -z\rangle$. Thus, we might need many select adapters just for $-x$, especially if the licensees are sorted in an unfavorable way.\\
Therefore, it is desirable to sort the licensees in such a way that the feature lists of as many word items as possible match as long as possible when read from the back.\\
For a demonstration we present the following lexicon:
\begin{align*}
&\langle \text{zehn} :: {=}\mathcal{C}1, +zeh, \mathcal{C}2\rangle \\
&\langle \text{undzwanzig} :: {=}\mathcal{C}1, +un, \mathcal{C}2\rangle\\
&\langle \text{zig} :: {=}\mathcal{C}1, +zi, \mathcal{C}2\rangle
&\vspace{0.3cm}\\
&\langle \text{eins} :: \mathcal{C}1\rangle \\
&\langle \text{zwei} :: \mathcal{C}1, -un\rangle\\
&\langle \text{drei} :: \mathcal{C}1, -zeh, -un\rangle \\
&\langle \text{vier} :: \mathcal{C}1, -zi, -zeh, -un\rangle\\
&\langle \text{fünf} :: \mathcal{C}1, -zi, -zeh, -un\rangle \\
&\langle \text{sechs} :: \mathcal{C}1, -un\rangle\\
&\langle \text{sieben} :: \mathcal{C}1, -un\rangle \\
&\langle \text{acht} :: \mathcal{C}1, -un\rangle\\
&\langle \text{neun} :: \mathcal{C}1, -zi, -zeh, -un\rangle
\end{align*}
In the given lexicon, the licensees in the feature lists of the word items (category $\mathcal{C}1$) are always sorted in such a way that the more frequent licensees are further back. This allows us to adapt all word items arbitrarily (so that exactly one arbitrary licensee or no licensee remains) with the following adapter items alone:
\begin{align*}
&\text{Remove adapters:}\\
&\langle \epsilon :: {=}\mathcal{C}1, +zi, \mathcal{C}1\rangle\\
&\langle \epsilon :: {=}\mathcal{C}1, +zeh, \mathcal{C}1\rangle\\
&\langle \epsilon :: {=}\mathcal{C}1, +un, \mathcal{C}1\rangle\\
&\text{Select adapters:}\\
&\langle \epsilon :: {=}\mathcal{C}1, +zi, +zeh, +un, \mathcal{C}1, -zi\rangle\\
&\langle \epsilon :: {=}\mathcal{C}1, +zeh, +un, \mathcal{C}1, -zeh\rangle
\end{align*}
In the present example, the sorting is simple because the item sets of the three licensees form a series of inclusions, i.\,e., $I(un)\supset I(zeh)\supset I(zi)$, given $I(a)$ denotes the set of items that hold $-a$.\\
In general, however, the item sets of the licensees can be disjoint or partially overlapping. An especially unfortunate distribution has already been found in the `Sorbian Clock' grammar in \cite{maier2022minimalist}. In this case, smart sorting becomes complicated and requires advanced mathematical methods.\label{sorting} 

\section{How Overgeneration is avoided}
In this section we give insights into why MGs constructed out of category\babelhyphen{hard}recursion free CFGs do not overgenerate.

The insights are no proof by any means, as leaving out many minor and technical aspects leads to logical flaws. The full proof can be found in \ref{proof}.

First we explain, how category-recursion can cause overgeneration.

Consider a derivation $wordX\mapsto word\alpha$ caused by the rule $X\mapsto \alpha ~\# \mathcal{C}$.
If there is a category-recursion in the grammar, $\alpha$ could (indirectly) derive another rule of category $\mathcal{C}$. In MG format this would mean that there is an MG construction on top of $\langle \underline{\alpha}::\ldots\mathcal{C},\ldots\rangle$\footnote{Here, $\underline{\alpha}$ denotes the word $\alpha$ under omission of NTs} that again has category $\mathcal{C}$. Then $\langle \underline{\alpha}::\ldots\mathcal{C}, \ldots\rangle$ is not necessarily forced to trigger all its unneeded licensees before being merged by another word's item. Instead, it could potentially trigger some of its licensee later, if the construction gets back to category $\mathcal{C}$. In the mean time, operations would not go according to plan.

An example: Consider the lexicon
\begin{align*}
    \langle apple :: \mathcal{C}NP, -t\rangle\\
    \langle \epsilon :: {=}\mathcal{C}NP, +t, \mathcal{C}NP\rangle\\
    \langle sour\_ :: {=}\mathcal{C}NP, \mathcal{C}NP\rangle
\end{align*}
Here, 'apple' is a noun phrase that for whatever purpose holds a licensee $-t$, which a remove adapter exists for, and 'sour' is an adjective that is modelled as a noun phrase requiring another noun phrase with a free NT.\\
The remove adapter can be used to remove the $-t$ from 'apple', so 'sour' can perform a merge-1 on 'apple' to form the grammatical noun phrase 'sour\_apple'.\\
However, since the category is again $\mathcal{C}NP$ after the merge, as it was before, the remove shaper can also be applied afterwards, which causes the misconstruction 'applesour\_':

\begin{figure}[H]
\centering
\Tree [.{move-1: $\langle apple\epsilon sour\_ : \mathcal{C}NP\rangle$} [.{merge-1: $\langle \epsilon sour\_ : +t, \mathcal{C}NP\rangle, \langle apple : -t\rangle$} [.{$\langle \epsilon :: {=}\mathcal{C}NP, +t, \mathcal{C}NP\rangle$} ][.{merge-3: $\langle sour\_ : \mathcal{C}NP\rangle, \langle apple : -t\rangle$} [.{$\langle sour\_ :: {=}\mathcal{C}NP, \mathcal{C}NP\rangle$}  ][.{$\langle apple :: \mathcal{C}NP, -t\rangle$} ] ] ] ]
\caption{Category recursion allows ungrammatical arrangement of exponents}
\end{figure}

'applesour\_' is ungrammatical, since it cannot be generated by the underlying CFG.\footnote{Even if probably there is a cocktail called like that}

We can avoid misconstruction, if word items trigger all licensees of the mergees directly after the merge.

The present discussion contains the most important argument for the following auxiliary result:
\begin{theorem*}\label{main-aux-th}
Let $G$ be a MG that is constructed out of a category-recursion free CFG following the instructions in this paper. Let $l$ be a legal expression of $G$. Let $f,x$ be lexical or derived item of $G$, so that a merge of $f$ with $x$ is part of the construction of $l$.
Then $f$ and $x$ have the shapes
\begin{align*}
&x=\langle \ldots\cdot \mathcal{C},-t_1,\ldots,t_m\rangle\\
&f=\langle \ldots\cdot {=}\mathcal{C},+t_1,\ldots,+t_m,a,\ldots\rangle,
\end{align*}
where $a$
\begin{itemize}
\item is either a category, selector or licensor, but it
\item is not a licensor, if $\mathcal{C}(f)\neq\mathcal{C}(x)$.
\end{itemize}
\end{theorem*}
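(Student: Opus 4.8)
The plan is to fix the legal expression $l$ and to argue about an arbitrary merge of $f$ with $x$ occurring in its construction, reducing the statement to a finite case check over the item types produced by the instructions, supported by one global invariant proved by induction on the derivation tree. The invariant I would establish first is that every MG operation deletes features only from the \emph{front} of a feature list, so that the feature list of the head of any derived item is a suffix of the feature list of the lexical item it descends from. Combined with the feature-order discipline (a lexical feature list is a block of selectors and licensors, then exactly one category, then a block of licensees), this has two immediate consequences: since $f$ carries a selector ${=}\mathcal{C}$ as its foremost feature, $f$'s feature list is a selector-initial suffix of a lexical word item or adapter; and the feature $a$ following the matched licensors is necessarily another selector, a licensor, or the category --- it can never be a licensee, because licensees only occur behind the category.

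Next I would read the shape of $f$ off this classification. A right-free or two-handed-free word item (with full feature list or after its first merge) carries \emph{no} licensor between its leading selector and the following category or second selector, so there $m=0$ and $a$ is a category or a selector. A left-restricted word item and a remove adapter carry exactly one licensor, and a select adapter (Instruction~\ref{instr-select}) carries a block of them, in every case immediately followed by the category. The decisive structural fact is that an adapter restates its own selecting category right after its licensors, so that $\mathcal{C}(f)=\mathcal{C}(x)=\mathcal{C}$ for every adapter merge; consequently, whenever $\mathcal{C}(f)\neq\mathcal{C}(x)$ the merger $f$ cannot be an adapter and must be a word item, for which the feature after the licensors is a category or a selector, never a licensor. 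This already yields the dichotomy of the statement in contrapositive form.

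It remains to pin down $m$ and the matching $+t_i\leftrightarrow -t_i$, and here the hypotheses do the work. By the move rules a licensor is discharged only against a movee whose foremost feature is the matching licensee, and the remove and select adapters are built precisely so that their licensor blocks reproduce, in order, the licensee suffixes that Instruction~\ref{distr-licensee} attaches to the word items; hence any licensor of $f$ that is triggered matches a leading licensee of $x$ in order. Because $l$ is \emph{legal}, the derivation runs to completion and every licensor is eventually discharged, so no licensee of $x$ left unmatched by $f$ may survive a change of category: by the licensee distribution together with the normalization of each NT's target category, a licensee $-a_R$ occurs only on items of the single category that the restricted NT $R$ targets, so its removing adapter lives only in that category $\mathcal{C}(x)$; but category-recursion freeness makes the order $\succ$ of Equation~\ref{catorder} strict, so once the construction leaves $\mathcal{C}(x)$ for $\mathcal{C}(f)\neq\mathcal{C}(x)$ it can never return, and an unmatched licensee would strand the derivation. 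Thus before a word-item merge the mergee carries exactly the licensees $-t_1,\ldots,-t_m$ matched by $f$'s $+t_1,\ldots,+t_m$, giving the asserted shapes with $a$ a category or selector; for an adapter merge ($\mathcal{C}(f)=\mathcal{C}(x)$) trailing licensees may legitimately remain, to be stripped by later same-category adapters, and this is exactly the case in which $a$ is itself a licensor.

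I expect the main obstacle to lie in the bookkeeping of the chain structure. The licensees that $f$'s licensors discharge may sit on the head of $x$ or have been pushed into a moving sub-chain, so the clean reading $x=\langle \ldots\cdot \mathcal{C},-t_1,\ldots,-t_m\rangle$ must be defended by showing that, in a legal construction, a word whose next licensee is still \emph{needed} always carries it on its own head rather than in a chain --- this is what the final move-1 closing each adapting block secures, as in Figure~\ref{fig:vier}, where \emph{vier} returns to the head position bearing exactly $-zeh$. Establishing this, keeping it separate from the select-adapter situation where $m$ may be $0$ while $f$ still carries licensors acting on the chain, and invoking the Shortest-Move discipline so that each triggered licensor has a \emph{unique} matching movee, are the delicate points; once they are in place, the remainder is the finite case check against the item types together with the strict-order argument above.
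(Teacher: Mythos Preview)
Your proposal follows the same overall arc as the paper's proof: classify mergers into adapters versus (self-derivations of) word items, observe that adapters satisfy $\mathcal{C}(f)=\mathcal{C}(x)$, and for word-item mergers argue that an unmatched licensee on $x$ would strand the derivation because the head category cannot return to $\mathcal{C}(x)$. The suffix invariant in your first paragraph and the finite case check in your second are correct and match the paper's Observations~\ref{itemshape}--\ref{profiling} and Lemma~\ref{classmerg}.

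The substantive gap is in your third paragraph. You write that ``category-recursion freeness makes the order $\succ$ of Equation~\ref{catorder} strict, so once the construction leaves $\mathcal{C}(x)$ for $\mathcal{C}(f)\neq\mathcal{C}(x)$ it can never return.'' Strictness of $\succ$ is a statement about the \emph{CFG}; you have not shown that the category trajectory of an \emph{MG} derivation respects $\succ$. Concretely, you need that whenever a (self-derivation of a) word item $w$ merges a mergee of category $\mathcal{C}'$, then $\mathcal{C}(w)\succ\mathcal{C}'$ holds in the CFG. This is the content of the paper's Lemma~\ref{input-criterion} (Detection of CFG rules), and to apply it along a derivation you also need that every merger is either an adapter or a self-derivation of a word item (Lemma~\ref{classmerg}), that derived items are adapted derivations of word items (Lemma~\ref{classderiva}), and that adapted derivations retain category and licensees (Lemma~\ref{shapedderiva}). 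Only with these in hand can you chain $\mathcal{C}(\overline{w_p})\succ\cdots\succ\mathcal{C}(\overline{w_0})$ and invoke strictness to get $p=0$, which is exactly the paper's Lemma~\ref{MG-cat-recursion}. A secondary imprecision: you say the licensee's ``removing adapter lives only in that category $\mathcal{C}(x)$,'' but the matching licensor $+a_R$ also sits in left-restricted \emph{word} items; the correct uniform statement (which the paper uses) is that $+a_R$ only ever appears behind a selector ${=}\mathcal{C}(x)$.

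Your final paragraph correctly anticipates the chain bookkeeping as the delicate point, but note that the paper sidesteps most of it by the classification lemmas rather than by a direct Shortest-Move argument: once one knows every derived item is an adapted derivation of a head word, the chain shapes that can actually arise in a legal tree are tightly constrained (Figure~\ref{pathsofadaptedderivation}), and the possible merge scenarios reduce to the finite list in Table~\ref{tab:whenawordmerges}.
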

The theorem mirrors the results of Theorem~\ref{merge-criteria} and it is proven in \ref{proof}.
It requires many auxiliary results beforehand to prove that each licensee has to be triggered after a word item gets merged. The graph in Figure \ref{fig:aux-th-graph} shows headlines of all used auxiliary results, arranged in a graph of dependencies:
\begin{figure}
    \centering
    \includegraphics[scale = 0.52]{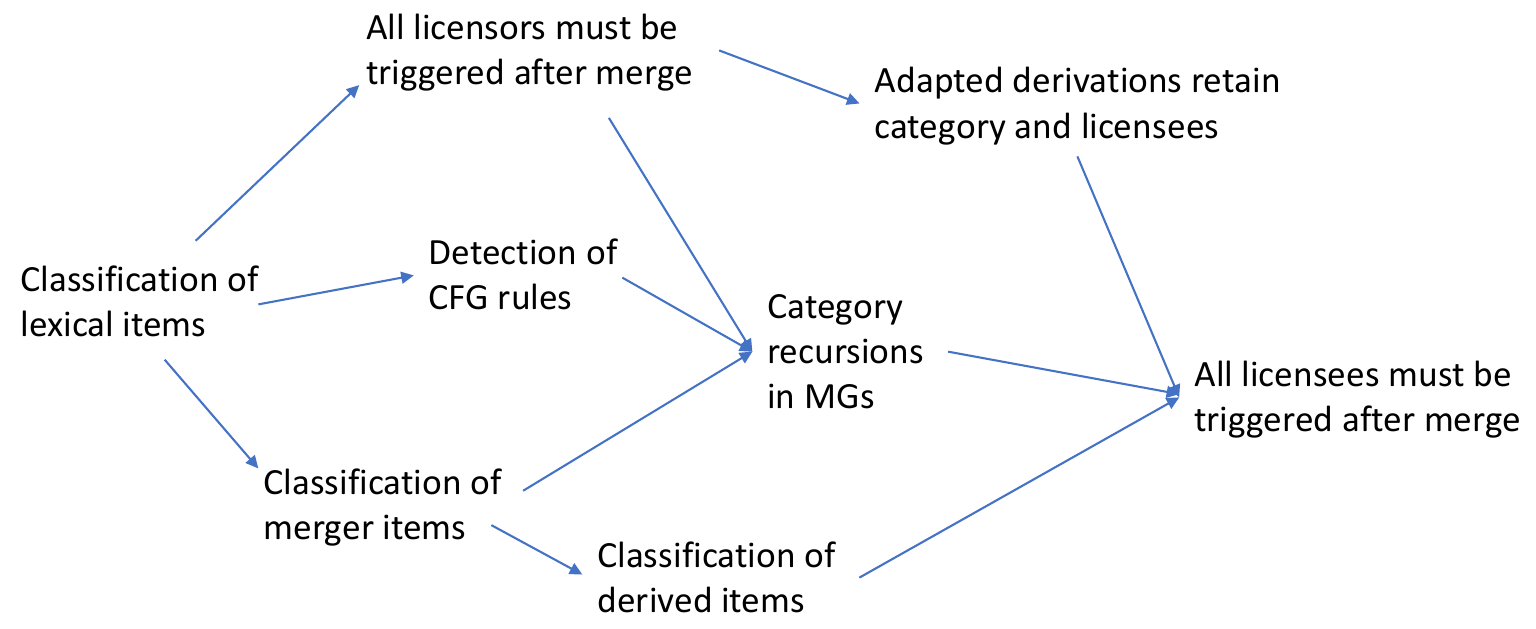}
    \caption{Graph of auxiliary results (represented by headlines) used to prove Theorem~\ref{merge-criteria}.3. The arrows indicate that each result used all parent results in order to be proven.}
    \label{fig:aux-th-graph}
\end{figure}

Next, we describe, how it follows from the auxiliary result that a MG constructed out of a category-recursion free CFG does not overgenerate.

We call the feature sequence ${=}\mathcal{C},+t_1,\ldots,+t_m$ the foremost selector\babelhyphen{hard}licensor block of $f$,
The theorem can be interpreted so that
\begin{itemize}
\item a merger $f$'s foremost selector-licensor block needs to correspond to the start of its mergee $x$'s feature list and
\item when a mergee $x$ gets merged by a merger with $\mathcal{C}(f)\neq\mathcal{C}(x)$, $f$'s foremost selector-licensor block has to match $x$'s entire feature list exactly.
\end{itemize}
Note that $\mathcal{C}(f)\neq\mathcal{C}(x)$ whenever $f$ represents a CFG word, otherwise there would be a category-recursion in the CFG.
Considering the equivalence
\begin{align*}
f\text{ is not an adapter}\Leftrightarrow f\text{ is a word item }\Leftrightarrow \mathcal{C}(f)\neq\mathcal{C}(x),
\end{align*}
we may rephrase the meaning of Theorem~\ref{main-aux-th} to:
An item $x$ can be merged by
\begin{itemize}
\item a shaper item, if the shapers foremost selector-licensor block corresponds to $x$'s foremost features
\item a word item, if the word item's foremost selector-licensor block corresponds to $x$'s entire feature list.
\end{itemize}

Based on these interpretations, we can fully describe, by what kind of merger items a mergee item of $x$'s shape can be merged until it is merged by a word item (see Figure\ref{fig:possibility}).

\begin{figure}[ht]
\centering
\includegraphics[scale = 0.6]{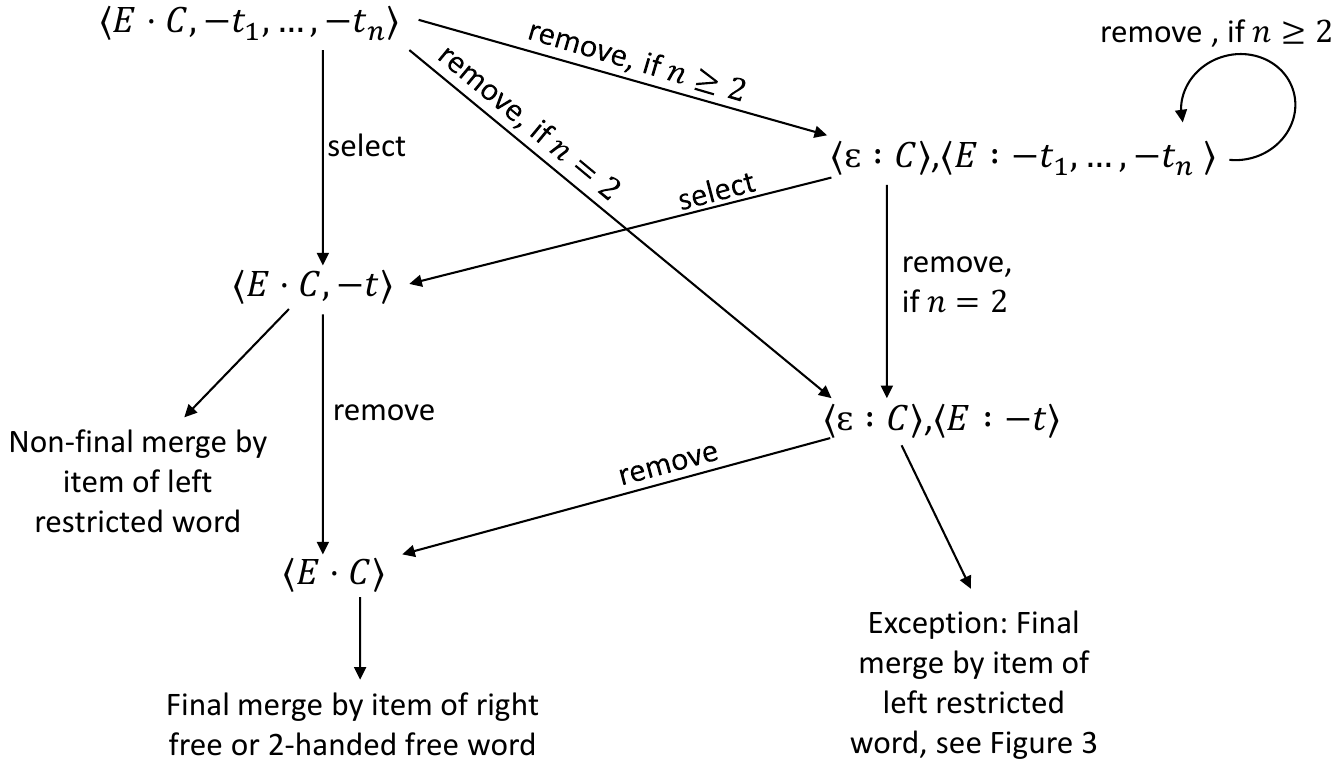}
\caption{All possible legal paths of derivation from an item of shape $\langle E\cdot \mathcal{C}, -t_1,\dots,-t_n\rangle$ or $\langle \epsilon :\mathcal{C}\rangle, \langle E:-t_1,\dots,-t_n\rangle$. Each arrow represents a merge- and potential consecutive move-operation(s) performed on the item that the arrow goes out of. The markers remove/select indicate that the arrow's merge- and move-operation(s) are triggered by a remove/select adapter.}\label{fig:possibility}
\end{figure}
It can be seen that merges by word's items only happen in such ways as they are described in Subsection~\ref{basic-wordshapes}. This way, we can show that misconstructions such as 'applesour\_' do not appear.

\section{Summary and Results}\label{summary}
We have presented the following algorithm to turn a CFG with categories into a MG:
\begin{enumerate}
    \item Add rule $S_0\mapsto S ~\# \mathcal{C}Fin$ to the CFG and reset the start symbol to $S_0$
    \item Normalize each NT's target category:
    \begin{itemize}
        \item[] For each NT $A$:
        \begin{itemize}
            \item[] If $A$ has rules of more than one category:
            \begin{itemize}
                \item[] Choose main category $\mathcal{C}_m$ so that neither of the other categories have a word containing a NT that has a rule of $\mathcal{C}_m$.
                \item[] For all other categories $\mathcal{C}_i$ of $A$:
                \begin{itemize}
                    \item[] introduce a new auxiliary NT $A_i$ and add a rule $A\mapsto A_i ~\# \mathcal{C}_m$
                    \item[] replace all rules $A\mapsto \ldots ~\# \mathcal{C}_i$ by $A_i\mapsto \ldots: \mathcal{C}_i$
                \end{itemize}
            \end{itemize}
        \end{itemize}
    \end{itemize}
    \item Classify all NTs as free or restricted with respect to Definition~\ref{def-free}
    \item Decompose each rule
    \begin{align*}
        A\mapsto s_0X_1s_1X_2\ldots X_ns_n ~\# \mathcal{C}
    \end{align*}
    into several rules
    \begin{align*}
    &A&\mapsto s_0&A': &&&&&&&&\mathcal{C}\\
    &&&            A'&\mapsto X_1s_1&A'': &&&&&&\mathcal{C}'\\
    &&&&&                            \vdots\\
    &&&&&                            A^{(i)}&\mapsto X_is_i&A^{(i+1)}: &&&&\mathcal{C}^{(i)}\\
    &&&&&&&                                                 \vdots\\
    &&&&&&&                                                 A^{(n)}&&\mapsto X_ns_n: &&\mathcal{C}^{(n)},
    \end{align*}
    under usage of the following exceptions:
    \begin{itemize}
        \item If $s_0$ is $\epsilon$, summarize the first two rules to $A\mapsto X_1s_1A''$.
        \item If $s_n$ is $\epsilon$ and $X_n$ is free, summarize the last two rules  to $A^{(n-1)}\mapsto X_{n-1}s_{n-1}X_n: \mathcal{C}^{(n-1)}$.
        \item For $i=1,\ldots,n-1$, if $X_i$ is restricted, replace the rule $A^{(i)}\mapsto X_is_iA^{(i+1)}: \mathcal{C}^{(i+1)}$ by two rules $A^{(i)}\mapsto X'_is_iA^{(i+1)}: \mathcal{C}^{(i+1)}$ and $X'_i\mapsto X_i: \mathcal{C}'_i$.
        \item If $X_n$ is free and $s_n$ is not $\epsilon$, then replace $A^{(n)}\mapsto X_ns_n$ by two rules $A^{(n)}\mapsto X_ns_nO: \mathcal{C}^{(n)}$ and $O\mapsto \epsilon: \mathcal{C}nix$.
    \end{itemize}
    \item For each right free word $wordX$ of category $\mathcal{C}$ create a MG item
    \begin{align*}
        \langle word :: {=}\mathcal{C}(X), \mathcal{C}\rangle.
    \end{align*}
    For each 2-handed free word $YwordX$ of category $\mathcal{C}$ create a MG item
    \begin{align*}
        \langle word :: {=}\mathcal{C}(X), {=}\mathcal{C}(Y), \mathcal{C}\rangle.
    \end{align*}
    For each left restricted word $Yword$ of category $\mathcal{C}$ create a MG item
    \begin{align*}
        \langle word :: {=}\mathcal{C}(Y), +a_Y, \mathcal{C}\rangle.
    \end{align*}
    For each terminal word $word$ of category $\mathcal{C}$ create a MG item
    \begin{align*}
        \langle word :: \mathcal{C}\rangle.
    \end{align*}
    \item Give to each MG item $\langle word :: \ldots \mathcal{C}\rangle$ - which represents the CFG word $wordX$, $YwordX$, $Yword$ or $word$ in category $\mathcal{C}$ - one licensee $-a_R$ for each restricted NT $R$ that generates its CFG word.
    \item For each licensee $-a$ that appears in some item of category $\mathcal{C}$ in your lexicon, add a remove adapter $\langle\epsilon :: {=}\mathcal{C}, +a, \mathcal{C}\rangle$
    \item For each licensee $-x$ and each non-empty sequence $-t$ of licensees that appears behind $-x$ in the feature list of some item of some category $C$, add a select adapter $\langle\epsilon::{=}\mathcal{C},+x,+t,\mathcal{C},-x\rangle$, where $+t$ denotes the sequence of the licensors corresponding to the licensees in $-t$.
\end{enumerate}

\vspace{0.3cm}
The implementation can be found in \cite{implementation}.\\
In \cite{maier2022minimalist} we uploaded MG lexica that are designed based on the ideas of the presented algorithm. They generate numerals words in Deutsch, English, French, Mandarin and Upper Sorbian, as well as date and time of day expressions in Upper Sorbian. For the Upper Sorbian language sets (word classes), we also created FST lexica and discussed the pros and cons of FSTs and MGs in Sections 2.1. and 2.2. of \cite{maier-EtAl:2022:EURALI}. We concluded that MGs can work with significantly smaller lexica than FSTs do, especially if the grammar becomes more complicated. In particular, MGs can deal more efficiently with iteration and with dependency between non-adjacent morphemes. Table~\ref{tab:lexsizes} 
shows sizes of MG lexica and some FST and CFG lexica and underlines our conclusions on a small sample size.

\begin{table}[ht]
\centering
\begin{tabular}{ l | r | r | r}
Language & MG & FST & CFG \\
\hline
English Numerals $<10^6$ & 40 & - & 39 \\
Deutsch Numerals $<10^6$ & 56 & - & 75 \\
French Numerals $<10^6$ & 66 & - & 81 \\
Mandarin Numerals $<10^8$ & 63 & - & 53 \\
Upper Sorbian Numerals $<10^{15}$ & 131 & 356 & 188 \\
Upper Sorbian Times of Day & 133  & 1479 & 75 \\
Upper Sorbian Dates & 133 & 118 & 210 \\
YYYY-MM-DD & 56 & - & 82 \\
\end{tabular}
\caption{Lexicon sizes for different languages in MG, FST and CFG format}
\label{tab:lexsizes}
\end{table}
For the MG lexica the sorting problem of licensees has been dealt with manually.
As can be seen, MGs can often operate on similar or smaller sized lexicon than CFGs, with 2 exceptions:
\begin{itemize}
    \item For the Mandarin numerals many words with inner inputs were used, e.g. for $E,\textit{qiān},O,P,D\mapsto^* \textit{èr qiān líng yī shí yī}$ (by word: two-thousand-zero-one-eleven). For MG format the rule would have to be decomposed in many rules in Instruction 4, which causes the MG lexicon to be bigger than the CFG lexicon. However, many CFG applications - e.g. the Cocke-Younger-Kasami algorithm - ask for a CFG normal form anyway that would not allow inner NTs either.
    \item The Sorbian Clock is an unfortunate case where the licensees have an especially high degree of mutual overlap. This is because the licensees control the agreement between number words for time and daytime words, e.g. 'at nine in the morning' is desired, while 'at nine in the afternoon' is not. This agreement naturally has much overlap, since the boundaries between e.g. morning and noon are blurred
\end{itemize}

\section{Outlook}
We presented an algorithm on how to transform a CFG into a MG that generates a desired language without overgeneration.
In~\cite{RLMNG} semantic terms were introduced into MGs. So, their inclusion into the algorithm is naturally desired.

Apart from Stabler's original MG style, there are also further implementations of the Minimalist Program which the presented algorithm could be adapted to, see~\cite{graf2012movement} and~\cite{stabler2011computational}. We assume that this can work again by determining the word shapes that are easiest to MG model and then use a decomposition into those shapes, similar as in Instruction 4.

Since MGs have been compared with multiple context-free grammars in the past, see \cite{meryMCFG2006}, a similar MCFG-to-MG algorithm may be concievable.

In Subsection~\ref{sorting}, we presented a basic idea how to ideally sort the licensees in the feature lists, such that less select adapters are required in the lexicon. 
More detailed ideas have yet been applied in the construction of the `Sorbian Clock' grammar in~\cite{maier2022minimalist}. The ideas can be extended to a general sorting algorithm. The goal of the sorting algorithm would be to sort the licensees in such a way that as many given feature lists as possible match as long as possible when read from the back.

The usual questions about runtime and effect of such an algorithm can then be included in the evaluation of the utility: How much more useful is a lexicon with a few powerful (select adapter) items compared to a lexicon with a mass of single-use (select adapter) items? How do the resulting MGs compare to other grammar formats like, e.\,g., Finite State Transducers?

As the proposed instructions are ideal for grammars without recursions, a good field of application is numeral grammars, as they have a clear hierarchy.
In \cite{MaierEUNICE} we have proposed an algorithm to decompose numerals. Based on the determined compositions, numeral systems of over 250 natural languages can be presented as a CFG and then transformed into a MG. The producable grammars can also be automatically equipped with number semantics. This way, valuable resources can be produced for speech technology of peculiar languages.

The algorithm facilitates the production of a bigger database of minimalist grammars, which may even evade overgeneration. MGs constructed with the present algorithm offer an explanation for each feature. Since MGs naturally also have a strong distinction between code and lexicon, they are especially well-suited for explainable AI methods, such as language learning algorithms, potentially even for spontaneous speech.

\bibliographystyle{elsarticle-harv}
\bibliography{CFG2MG}

\begin{thebibliography}{35}
\expandafter\ifx\csname natexlab\endcsname\relax\def\natexlab#1{#1}\fi
\providecommand{\url}[1]{\texttt{#1}}
\providecommand{\href}[2]{#2}
\providecommand{\path}[1]{#1}
\providecommand{\DOIprefix}{doi:}
\providecommand{\ArXivprefix}{arXiv:}
\providecommand{\URLprefix}{URL: }
\providecommand{\Pubmedprefix}{pmid:}
\providecommand{\doi}[1]{\href{http://dx.doi.org/#1}{\path{#1}}}
\providecommand{\Pubmed}[1]{\href{pmid:#1}{\path{#1}}}
\providecommand{\bibinfo}[2]{#2}
\ifx\xfnm\relax \def\xfnm[#1]{\unskip,\space#1}\fi
\bibitem[{Amblard(2011)}]{amblard2011minimalist}
\bibinfo{author}{Amblard, M.}, \bibinfo{year}{2011}.
\newblock \bibinfo{title}{Minimalist grammars and minimalist categorial
  grammars, definitions toward inclusion of generated languages}.
\newblock \href{http://arxiv.org/abs/1108.5096}{{\tt arXiv:1108.5096}}.
\bibitem[{Chomsky(1995)}]{Chomsky1995}
\bibinfo{author}{Chomsky, N.}, \bibinfo{year}{1995}.
\newblock \bibinfo{title}{The Minimalist Program}.
\newblock \bibinfo{publisher}{MIT Press}.
\newblock \DOIprefix\doi{10.7551/mitpress/9780262527347.001.0001}.
\bibitem[{Copestake(2002)}]{copestake2002implementing}
\bibinfo{author}{Copestake, A.}, \bibinfo{year}{2002}.
\newblock \bibinfo{title}{Implementing typed feature structure grammars}.
  volume \bibinfo{volume}{110}.
\newblock \bibinfo{publisher}{CSLI publications Stanford}.
\bibitem[{Englisch(2019)}]{englischimplementing}
\bibinfo{author}{Englisch, J.}, \bibinfo{year}{2019}.
\newblock \bibinfo{title}{Implementing minimalist syntax and remove}.
\newblock \bibinfo{journal}{Structure Removal} .
\bibitem[{Ermolaeva(2020)}]{ermolaeva2020induction}
\bibinfo{author}{Ermolaeva, M.}, \bibinfo{year}{2020}.
\newblock \bibinfo{title}{Induction of minimalist grammars over morphemes}.
\newblock \bibinfo{journal}{Proceedings of the Society for Computation in
  Linguistics} \bibinfo{volume}{3}, \bibinfo{pages}{484--487}.
\newblock \DOIprefix\doi{10.7275/jed1-5j57}.
\bibitem[{Ermolaeva(2021)}]{ermolaeva-2021-deconstructing}
\bibinfo{author}{Ermolaeva, M.}, \bibinfo{year}{2021}.
\newblock \bibinfo{title}{Deconstructing syntactic generalizations with
  minimalist grammars}, in: \bibinfo{booktitle}{Proceedings of the 25th
  Conference on Computational Natural Language Learning},
  \bibinfo{publisher}{Association for Computational Linguistics},
  \bibinfo{address}{Online}. pp. \bibinfo{pages}{435--444}.
\newblock \URLprefix \url{https://aclanthology.org/2021.conll-1.34},
  \DOIprefix\doi{10.18653/v1/2021.conll-1.34}.
\bibitem[{Fowlie and Koller(2017)}]{Fowlie2017}
\bibinfo{author}{Fowlie, M.}, \bibinfo{author}{Koller, A.},
  \bibinfo{year}{2017}.
\newblock \bibinfo{title}{Parsing minimalist languages with interpreted regular
  tree grammars}, in: \bibinfo{booktitle}{Proceedings of the 13th International
  Workshop on Tree Adjoining Grammars and Related Formalisms}, pp.
  \bibinfo{pages}{11--20}.
\bibitem[{Graben et~al.(2019)Graben, Römer, Meyer, Huber and Wolff}]{RLMNG}
\bibinfo{author}{Graben, P.b.}, \bibinfo{author}{Römer, R.},
  \bibinfo{author}{Meyer, W.}, \bibinfo{author}{Huber, M.},
  \bibinfo{author}{Wolff, M.}, \bibinfo{year}{2019}.
\newblock \bibinfo{title}{Reinforcement learning of minimalist numeral
  grammars}, in: \bibinfo{booktitle}{2019 10th IEEE International Conference on
  Cognitive Infocommunications (CogInfoCom)}, pp. \bibinfo{pages}{67--72}.
\newblock \DOIprefix\doi{10.1109/CogInfoCom47531.2019.9089924}.
\bibitem[{Graf(2012)}]{graf2012movement}
\bibinfo{author}{Graf, T.}, \bibinfo{year}{2012}.
\newblock \bibinfo{title}{Movement-generalized minimalist grammars}, in:
  \bibinfo{booktitle}{International Conference on Logical Aspects of
  Computational Linguistics}, \bibinfo{organization}{Springer}. pp.
  \bibinfo{pages}{58--73}.
\newblock \DOIprefix\doi{10.1007/978-3-642-31262-5_4}.
\bibitem[{Graf(2017)}]{graf2017}
\bibinfo{author}{Graf, T.}, \bibinfo{year}{2017}.
\newblock \bibinfo{title}{A computational guide to the dichotomy of features
  and constraints}.
\newblock \bibinfo{journal}{Glossa: a journal of general linguistics}
  \bibinfo{volume}{2}, \bibinfo{pages}{18}.
\newblock \DOIprefix\doi{10.5334/gjgl.212}.
\bibitem[{Harkema(2001)}]{harkema}
\bibinfo{author}{Harkema, H.}, \bibinfo{year}{2001}.
\newblock \bibinfo{title}{A characterization of minimalist languages}, in:
  \bibinfo{editor}{de~Groote, P.}, \bibinfo{editor}{Morrill, G.},
  \bibinfo{editor}{Retor{\'e}, C.} (Eds.), \bibinfo{booktitle}{Logical Aspects
  of Computational Linguistics}, \bibinfo{publisher}{Springer},
  \bibinfo{address}{Berlin, Heidelberg}. pp. \bibinfo{pages}{193--211}.
\newblock \DOIprefix\doi{10.1007/3-540-48199-0_12}.
\bibitem[{Herring(2016)}]{herring2016grammar}
\bibinfo{author}{Herring, J.}, \bibinfo{year}{2016}.
\newblock \bibinfo{title}{Grammar construction in the Minimalist Program}.
\newblock Ph.D. thesis. Indiana University.
\bibitem[{Hopcroft and Ullman(1979)}]{HopcroftUllman1979IATLC}
\bibinfo{author}{Hopcroft, J.E.}, \bibinfo{author}{Ullman, J.D.},
  \bibinfo{year}{1979}.
\newblock \bibinfo{title}{Introduction to Automata Theory, Languages and
  Computation}.
\newblock \bibinfo{publisher}{Addison-Wesley}.
\bibitem[{Indurkhya(2019)}]{indurkhya2019}
\bibinfo{author}{Indurkhya, S.}, \bibinfo{year}{2019}.
\newblock \bibinfo{title}{Automatic inference of minimalist grammars using an
  smt-solver}.
\newblock \bibinfo{journal}{CoRR} \bibinfo{volume}{abs/1905.02869}.
\newblock \URLprefix \url{http://arxiv.org/abs/1905.02869},
  \DOIprefix\doi{10.48550}, \href{http://arxiv.org/abs/1905.02869}{{\tt
  arXiv:1905.02869}}.
\bibitem[{Jäger and Rogers(2012)}]{JaegerRogers2012F}
\bibinfo{author}{Jäger, G.}, \bibinfo{author}{Rogers, J.},
  \bibinfo{year}{2012}.
\newblock \bibinfo{title}{Formal language theory: refining the chomsky
  hierarchy}.
\newblock \bibinfo{journal}{Philosophical Transactions of the Royal Society B:
  Biological Sciences} \bibinfo{volume}{367}, \bibinfo{pages}{1956–1970}.
\newblock \URLprefix \url{http://dx.doi.org/10.1098/rstb.2012.0077},
  \DOIprefix\doi{10.1098/rstb.2012.0077}.
\bibitem[{Kobele(2010)}]{kobele2010}
\bibinfo{author}{Kobele, G.M.}, \bibinfo{year}{2010}.
\newblock \bibinfo{title}{Without remnant movement, mgs are context-free}, in:
  \bibinfo{editor}{Ebert, C.}, \bibinfo{editor}{J{\"a}ger, G.},
  \bibinfo{editor}{Michaelis, J.} (Eds.), \bibinfo{booktitle}{The Mathematics
  of Language}, \bibinfo{publisher}{Springer Berlin Heidelberg},
  \bibinfo{address}{Berlin, Heidelberg}. pp. \bibinfo{pages}{160--173}.
\bibitem[{Kobele(2011)}]{kobele2011minimalist}
\bibinfo{author}{Kobele, G.M.}, \bibinfo{year}{2011}.
\newblock \bibinfo{title}{Minimalist tree languages are closed under
  intersection with recognizable tree languages}, in:
  \bibinfo{booktitle}{International Conference on Logical Aspects of
  Computational Linguistics}, \bibinfo{organization}{Springer}. pp.
  \bibinfo{pages}{129--144}.
\newblock \DOIprefix\doi{10.1007/978-3-642-22221-4_9}.
\bibitem[{Kobele(2014)}]{kobele2014meeting}
\bibinfo{author}{Kobele, G.M.}, \bibinfo{year}{2014}.
\newblock \bibinfo{title}{Meeting the boojum}.
\newblock \bibinfo{journal}{Theoretical Linguistics} \bibinfo{volume}{40},
  \bibinfo{pages}{165--173}.
\newblock \DOIprefix\doi{10.1515/tl-2014-0007}.
\bibitem[{Kobele(2021)}]{kobele2021}
\bibinfo{author}{Kobele, G.M.}, \bibinfo{year}{2021}.
\newblock \bibinfo{title}{Minimalist grammars and decomposition}.
\newblock \bibinfo{note}{Submitted}.
\bibitem[{Maier(2023)}]{implementation}
\bibinfo{author}{Maier, I.}, \bibinfo{year}{2023}.
\newblock \bibinfo{title}{{CFG-to-MG-algorithm}}.
\newblock \URLprefix
  \url{https://github.com/ikmMaierBTUCS/CFG-to-MG-algorithm}.
\bibitem[{Maier and Kuhn(2022)}]{maier2022minimalist}
\bibinfo{author}{Maier, I.}, \bibinfo{author}{Kuhn, J.}, \bibinfo{year}{2022}.
\newblock \bibinfo{title}{{Minimalist Grammars - Numerals and expressions for
  date and time of day}}.
\newblock \URLprefix
  \url{https://github.com/ikmMaierBTUCS/Minimalist-Grammars/}.
\bibitem[{Maier et~al.(2022)Maier, Kuhn, Duckhorn, Kraljevski, Sobe, Wolff and
  Tschöpe}]{maier-EtAl:2022:EURALI}
\bibinfo{author}{Maier, I.}, \bibinfo{author}{Kuhn, J.},
  \bibinfo{author}{Duckhorn, F.}, \bibinfo{author}{Kraljevski, I.},
  \bibinfo{author}{Sobe, D.}, \bibinfo{author}{Wolff, M.},
  \bibinfo{author}{Tschöpe, C.}, \bibinfo{year}{2022}.
\newblock \bibinfo{title}{Word class based language modeling: A case of upper
  sorbian}, in: \bibinfo{booktitle}{Proceedings of The Workshop on Resources
  and Technologies for Indigenous, Endangered and Lesser-resourced Languages in
  Eurasia within the 13th Language Resources and Evaluation Conference},
  \bibinfo{publisher}{European Language Resources Association},
  \bibinfo{address}{Marseille, France}. pp. \bibinfo{pages}{28--35}.
\newblock \URLprefix \url{https://aclanthology.org/2022.eurali-1.5}.
\bibitem[{Maier and Wolff(2022)}]{MaierEUNICE}
\bibinfo{author}{Maier, I.}, \bibinfo{author}{Wolff, M.}, \bibinfo{year}{2022}.
\newblock \bibinfo{title}{Poster: Decomposing numerals}, in:
  \bibinfo{booktitle}{EUNICE Science Dissemination: Poster Competition}.
\newblock \URLprefix
  \url{https://www.b-tu.de/en/communications-engineering/publications/papers\#c314785},
  \DOIprefix\doi{10.5281/zenodo.7501698}.
\bibitem[{Mery et~al.(2006)Mery, Amblard, Durand and Retoré}]{meryMCFG2006}
\bibinfo{author}{Mery, B.}, \bibinfo{author}{Amblard, M.},
  \bibinfo{author}{Durand, I.}, \bibinfo{author}{Retoré, C.},
  \bibinfo{year}{2006}.
\newblock \bibinfo{title}{A Case Study of the Convergence of Mildly
  Context-Sensitive Formalisms for Natural Language Syntax: from Minimalist
  Grammars to Multiple Context-Free Grammars}.
\newblock \bibinfo{type}{Technical Report}. INRIA.
\bibitem[{Michaelis(2001)}]{Michaelis2001OnFP}
\bibinfo{author}{Michaelis, J.}, \bibinfo{year}{2001}.
\newblock \bibinfo{title}{On Formal Properties of Minimalist Grammars}.
  volume~\bibinfo{volume}{13} of \textit{\bibinfo{series}{Linguistics in
  Potsdam}}.
\newblock \bibinfo{publisher}{Universitätsbibliothek, Publikationsstelle},
  \bibinfo{address}{Potsdam}.
\bibitem[{Stabler(1997)}]{stabler97}
\bibinfo{author}{Stabler, E.}, \bibinfo{year}{1997}.
\newblock \bibinfo{title}{Derivational minimalism}, in:
  \bibinfo{editor}{Retor{\'e}, C.} (Ed.), \bibinfo{booktitle}{Logical Aspects
  of Computational Linguistics}, \bibinfo{publisher}{Springer Berlin
  Heidelberg}, \bibinfo{address}{Berlin, Heidelberg}. pp.
  \bibinfo{pages}{68--95}.
\newblock \DOIprefix\doi{10.1007/BFb0052152}.
\bibitem[{Stabler(2011a)}]{stabler-2011-top}
\bibinfo{author}{Stabler, E.}, \bibinfo{year}{2011}a.
\newblock \bibinfo{title}{Top-down recognizers for {MCFG}s and {MG}s}, in:
  \bibinfo{booktitle}{Proceedings of the 2nd Workshop on Cognitive Modeling and
  Computational Linguistics}, \bibinfo{publisher}{Association for Computational
  Linguistics}, \bibinfo{address}{Portland, Oregon, USA}. pp.
  \bibinfo{pages}{39--48}.
\newblock \URLprefix \url{https://aclanthology.org/W11-0605}.
\bibitem[{Stabler(2013)}]{Stabler2013}
\bibinfo{author}{Stabler, E.}, \bibinfo{year}{2013}.
\newblock \bibinfo{title}{Two models of minimalist, incremental syntactic
  analysis}.
\newblock \bibinfo{journal}{Topics in cognitive science} \bibinfo{volume}{5}.
\newblock \DOIprefix\doi{10.1111/tops.12031}.
\bibitem[{Stabler(2011b)}]{Stabler2011AGBT}
\bibinfo{author}{Stabler, E.P.}, \bibinfo{year}{2011}b.
\newblock \bibinfo{title}{After government and binding theory}, in:
  \bibinfo{editor}{van Benthem, J.F.A.K.}, \bibinfo{editor}{ter Meulen, A.}
  (Eds.), \bibinfo{booktitle}{Handbook of Logic and Language}.
  \bibinfo{edition}{2nd} ed.. \bibinfo{publisher}{Elsevier}, p.
  \bibinfo{pages}{395–414}.
\newblock \URLprefix
  \url{http://dx.doi.org/10.1016/B978-0-444-53726-3.00007-4},
  \DOIprefix\doi{10.1016/b978-0-444-53726-3.00007-4}.
\bibitem[{Stabler(2011c)}]{stabler2011computational}
\bibinfo{author}{Stabler, E.P.}, \bibinfo{year}{2011}c.
\newblock \bibinfo{title}{Computational perspectives on minimalism}.
\newblock \bibinfo{journal}{Oxford handbook of linguistic minimalism} ,
  \bibinfo{pages}{617--643}\DOIprefix\doi{10.1093/oxfordhb/9780199549368.013.0027}.
\bibitem[{Stabler and Keenan(2003)}]{STABLER2003345}
\bibinfo{author}{Stabler, E.P.}, \bibinfo{author}{Keenan, E.L.},
  \bibinfo{year}{2003}.
\newblock \bibinfo{title}{Structural similarity within and among languages}.
\newblock \bibinfo{journal}{Theoretical Computer Science}
  \bibinfo{volume}{293}, \bibinfo{pages}{345--363}.
\newblock \URLprefix
  \url{https://www.sciencedirect.com/science/article/pii/S0304397501003516},
  \DOIprefix\doi{https://doi.org/10.1016/S0304-3975(01)00351-6}.
  \bibinfo{note}{algebraic Methods in Language Processing}.
\bibitem[{Stanojevi{\'c} and Stabler(2018)}]{Stanojevic2018}
\bibinfo{author}{Stanojevi{\'c}, M.}, \bibinfo{author}{Stabler, E.},
  \bibinfo{year}{2018}.
\newblock \bibinfo{title}{A sound and complete left-corner parsing for
  minimalist grammars}, in: \bibinfo{booktitle}{Proceedings of the Eight
  Workshop on Cognitive Aspects of Computational Language Learning and
  Processing}, pp. \bibinfo{pages}{65--74}.
\newblock \DOIprefix\doi{10.18653/v1/W18-2809}.
\bibitem[{Torr(2019)}]{Torr2019a}
\bibinfo{author}{Torr, P.}, \bibinfo{year}{2019}.
\newblock \bibinfo{title}{Wide-coverage statistical parsing with minimalist
  grammars}.
\newblock Ph.D. thesis. University of Edinburgh.
\bibitem[{Versley(2016)}]{Versley2016}
\bibinfo{author}{Versley, Y.}, \bibinfo{year}{2016}.
\newblock \bibinfo{title}{Discontinuity (re) $^2$-visited: A minimalist
  approach to pseudoprojective constituent parsing}, in:
  \bibinfo{booktitle}{Proceedings of the Workshop on Discontinuous Structures
  in Natural Language Processing}, pp. \bibinfo{pages}{58--69}.
\newblock \DOIprefix\doi{10.18653/v1/W16-0907}.
\bibitem[{Vijay-Shanker and Weir(1994)}]{Vijay-ShankerWeir1994}
\bibinfo{author}{Vijay-Shanker, K.}, \bibinfo{author}{Weir, D.J.},
  \bibinfo{year}{1994}.
\newblock \bibinfo{title}{The equivalence of four extensions of context-free
  grammars}.
\newblock \bibinfo{journal}{Mathematical Systems Theory} \bibinfo{volume}{27},
  \bibinfo{pages}{511–546}.
\newblock \URLprefix \url{http://dx.doi.org/10.1007/BF01191624},
  \DOIprefix\doi{10.1007/bf01191624}.

\end{thebibliography}

\clearpage
\appendix
\section{Proof of Functionality and Exactness}\label{proof}
In this section,
\begin{itemize}
    \item $J$ is an arbitrary CFG with categories,
    \item $I$ is the CFG with categories constructed out of $J$ by Instructions 1-4 in Section~\ref{summary} and
    \item $G$ is an MG constructed out of $I$ by the remaining instructions.
\end{itemize}
Remember that the terms 'word' and 'rule' always refer to CFGs and 'item' and 'operation' always refer to MGs.

We show that every expression generatable by $J$ is generatable by $G$ and that - if $J$ is category-recursion free - the reverse is also true.

First, we show that $I$ and $J$ have the same language.

\begin{theorem}
    $I$ and $J$ have the same language.
\end{theorem}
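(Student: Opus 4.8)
The plan is to show that each of Instructions 1--4 transforms a CFG with categories into another CFG with categories having exactly the same set of terminal yields, and then to conclude by transitivity, since $I$ is obtained from $J$ by applying these four instructions in sequence. Throughout I would treat the language of a CFG with categories as the set of terminal strings derivable from its start symbol; the category parameter $\mathcal{C}$ merely annotates each rule and plays no role in which strings are generated, so it may be ignored for this theorem. The recurring technical observation is that all four instructions only ever replace a rule (or a single symbol occurrence) by a collection of rules built around \emph{fresh} non-terminals that occur on a unique right-hand side, and such a replacement preserves the language precisely when those fresh non-terminals derive exactly the intended fragment and nothing else.

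First, Instruction 1: adjoining $S_0\mapsto S ~\# \mathcal{C}Fin$ with $S_0$ a fresh start symbol preserves the language, since every derivation from $S_0$ must begin with the unique rule $S_0\mapsto S$ and then proceed as a derivation from $S$, and conversely. Next, Instruction 2 splits a non-terminal $A$ having rules in several categories: for each non-main category $\mathcal{C}_i$ a fresh $A_i$ is introduced, the rule $A\mapsto A_i ~\# \mathcal{C}_m$ is added, and each $A\mapsto \alpha ~\# \mathcal{C}_i$ is replaced by $A_i\mapsto \alpha ~\# \mathcal{C}_i$. Because $A_i$ is fresh it appears on no other right-hand side and is reachable only through $A\mapsto A_i$, so every one-step derivation $A\mapsto \alpha$ of $J$ becomes the two-step detour $A\mapsto A_i\mapsto \alpha$ with the same continuation, and no new strings arise. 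Instruction 3 alters no rules and hence trivially preserves the language.

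The main work is Instruction 4, the decomposition of a rule $A\mapsto s_0X_1s_1\cdots X_ns_n ~\# \mathcal{C}$ into the linear chain $A\mapsto s_0A'$, $A'\mapsto X_1s_1A''$, $\ldots$, $A^{(n)}\mapsto X_ns_n$ through fresh auxiliary non-terminals. I would argue that each of $A',A'',\ldots,A^{(n)}$, together with the fresh $X'_i$ and $O$ introduced by the restricted-NT and left-free exceptions, occurs on a unique right-hand side and therefore admits exactly one expansion, so the only available derivation reconstructs the original right-hand side: $A\mapsto s_0A'\mapsto s_0X_1s_1A''\mapsto^* s_0X_1s_1\cdots X_ns_n$. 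Each of the four exceptional modifications must then be checked to remain language-neutral: contracting the first (resp.\ last) two rules when $s_0=\epsilon$ (resp.\ $s_n=\epsilon$ with $X_n$ free) merely collapses a forced step; the substitution $X'_i\mapsto X_i$ inserts a fresh non-terminal deriving only $X_i$; and $O\mapsto \epsilon$ contributes only the empty string. In every case the fresh non-terminals derive exactly the intended fragment, so the set of terminal yields from $A$, and hence the language, is unchanged.

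The main obstacle I anticipate is the bookkeeping in Instruction 4: one must verify that freshness is genuinely maintained (no auxiliary non-terminal collides with an existing symbol or with an auxiliary introduced elsewhere) and that each of the four exceptional modifications still yields a forced, language-neutral expansion. Once freshness and the uniqueness of each auxiliary non-terminal's production are established, language preservation of the whole instruction follows by a routine induction on derivation length, and composing the four steps gives $L(I)=L(J)$.
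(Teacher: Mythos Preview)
Your proposal is correct and rests on the same key idea as the paper's proof: the auxiliary non-terminals introduced by the instructions are fresh, each occurring on a unique right-hand side and possessing a unique production, so any derivation through them is forced to reconstruct the original rule. The only difference is structural---you proceed instruction by instruction and explicitly dispatch Instructions 1--3, whereas the paper's proof focuses almost entirely on Instruction~4 and absorbs the earlier instructions into a brief remark; the core freshness-and-uniqueness argument for Instruction~4 is identical in both.
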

\begin{proof}
    By Instruction 4, $I$ is made out of $J$ by seperating rules of the shape $A\mapsto s_0X_1s_1\ldots X_ns_n$ into many rules. (Also, some rules are united to one in the instruction, but this only affects rules that have been united in $J$ but intermediately separated in the process)

    It is obvious that the separated rules in $I$ combined can generate anything that the united rule in $J$ can. So, $L(J)\subseteq L(I)$.
    We need to show the reverse.

    Consider a legal derivation in $I$ that uses a rule that does not exist in $J$. Then this rule has been created in Instruction 4 in order to represent the $i$th terminal string of a word $\alpha$ from a rule $A\mapsto\alpha$ that exists in $J$.\footnote{It could also be $S_0\mapsto S$, but this rule has no impact on the language.} 
    We call the rule $A_i\mapsto\alpha_i$. We show that it is inevitable then that all separated parts of $A\mapsto \alpha$ are used, so $A\mapsto \alpha$ gets reconstructed in any case.

    Note that $A_i\mapsto \alpha_i$ contains at least one auxiliary NT, i.e. either $\alpha_i$ contains one or $A$ is one (or both). Auxiliary NTs are unique, i.e. they have only one occurrence in words and only one rule. So,
    \begin{itemize}
        \item if $A_i$ is auxiliary then there is one rule $A_{i-1}\mapsto \alpha_{i-1}$ where $\alpha_{i-1}$ contains $A_i$ and
        \item if $\alpha_i$ contains the auxiliary NT $A_{i+1}$ there is one rule $A_{i+1}\mapsto \alpha_{i+1}$.
    \end{itemize}
    By studying Instruction 4, one can notice that the rule(s) $A_{i-1}\mapsto \alpha_{i-1}$ and/or $A_{i+1}\mapsto \alpha_{i+1}$ are also separated parts of $\overline{A}\mapsto \overline{\alpha}$.
    So, by induction the abovementioned properties of $A_i\mapsto\alpha_i$ can also be applied on $A_j\mapsto\alpha_j$ for any $j$th component rule of $A\mapsto\alpha$.
    Thus, the derivation can only be continued with a unique combinable rule that also evolved from the separation of $A\mapsto \alpha$. The derivation path of the generation and derivation of auxiliary NTs around $A_i\mapsto \alpha_i$ cannot use other rules than those constructed in Instruction 4 until $A^0=A$ is not auxiliary at the upper end and until $\alpha_n=\alpha$ contains no auxiliary NTs at the lower end. Then the derivation around $A_i\mapsto \alpha_i$ comprises all rules of the separation of $A\mapsto \alpha$ and combined they reproduce $A\mapsto \alpha$.
    Thus, any rule $A_i\mapsto \alpha_i$ in $I$, separated from a united rule $A\mapsto \alpha$ in $J$ cannot generate anything beyond what $A\mapsto \alpha$ can. Hence, $L(I)\subseteq L(J)$ and $L(J)=L(I)$.

\end{proof}
From now it suffices to show the equivalence of $I$ and $G$.

\begin{theorem}
Each legal expression of $I$ is a legal expression of $G$.
\end{theorem}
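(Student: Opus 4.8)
The plan is to prove the inclusion $L(I)\subseteq L(G)$ by structural induction on an $I$-derivation tree of a given legal expression $w$. Since $I$ and $G$ are constructed independently of whether $J$ is category-recursion free, this direction will not use that hypothesis; it only becomes essential for the converse (no overgeneration). I would strengthen the statement into an inductive claim indexed by the demand a node's parent places on it: for a subtree rooted at an expansion of a non-terminal $A$ by a rule of category $\mathcal{C}$ with terminal yield $v$, I claim that (a) $G$ can derive the single item $\langle v : \mathcal{C}\rangle$ carrying no residual features, and (b) whenever $A$ is restricted, $G$ can instead derive the adapted item $\langle v : \mathcal{C},-a_A\rangle$ in which precisely the licensee $-a_A$ survives. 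Case (a) is what a free-NT parent consumes, while case (b) is the shape that the left-restricted model of Subsection~\ref{basic-wordshapes} feeds into a merge-3 followed by a move-1.

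The base case is a terminal rule $A\mapsto word\ \#\ \mathcal{C}$, whose word item $\langle word :: \mathcal{C},\ldots\rangle$ is supplied by Instruction 5 with trailing licensees added by Instruction 6. For the inductive step I would use the fact, established by Instruction 4, that every rule of $I$ is right free, two-handed free, left restricted, or terminal, and reassemble the children exactly as in Cases 1--3 of Subsection~\ref{basic-wordshapes}: for right free $word X$, a merge-1 of the word item with the child's $\langle v_X : \mathcal{C}(X)\rangle$; for two-handed free $Y word X$, a merge-1 with $X$'s item followed by a merge-2 with $Y$'s item; and for left restricted $Y word$, a merge-3 of $\langle word :: {=}\mathcal{C}(Y),+a_Y,\mathcal{C}\rangle$ with the child's adapted item $\langle v_Y : \mathcal{C}(Y),-a_Y\rangle$ from hypothesis (b), completed by the move-1 on $+a_Y/-a_Y$. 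In each case the exponents concatenate in exactly the order prescribed by the CFG word, which is the reason the move-1 in the restricted case places $v_Y$ on the left.

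The delicate part is the adapting interposed between a word item finishing its own selector-merges and being merged by its parent. After a word item has consumed its selectors it becomes a mergee of its own category carrying the trailing licensees assigned by Instruction 6, and before the parent merges it these must be reduced to match the parent's demand. I would invoke Instruction 6 to guarantee that the required licensee is present---the word item at the top of the subtree represents a rule $A\mapsto\rho$ with $A$ the restricted producer, so $-a_A$ is among its licensees---and invoke Instructions 7--8 together with the adapting procedure of Subsection~\ref{shaping} to isolate any single licensee, or remove all of them, using remove and select adapters of the relevant category. This is exactly the guarantee that $G$ lets each word item realize none or one arbitrary of its licensees, and the adapting is interposable because it returns an item of the same category.

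I expect the main obstacle to be precisely this licensee bookkeeping rather than the merge/move skeleton: one must show simultaneously that (i) for the particular restricted non-terminal used at a node the matching licensee occurs on the word item, (ii) the adapters can strip every other licensee while preserving it, including the special last-licensee case of Figure~\ref{fig:vierundzwanzig}, and (iii) the local operations splice with the children's derivations into a single well-formed MG derivation tree. Granting these, applying the induction at the root rule $S_0\mapsto S\ \#\ \mathcal{C}Fin$ yields $\langle w : \mathcal{C}Fin\rangle$, so $w\in L(G)$ and the theorem follows.
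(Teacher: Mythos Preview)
Your proposal is correct and follows essentially the same approach as the paper's proof: both argue by structural induction from the leaves of an $I$-derivation tree upward, invoking the adapting procedure of Subsection~\ref{shaping} to reduce each assembled subtree's trailing licensees to exactly the one (or none) demanded by its parent, and then applying the basic merge/move patterns of Subsection~\ref{basic-wordshapes} together with the last-licensee variant of Figure~\ref{fig:vierundzwanzig}. Your explicit two-part inductive hypothesis (a)/(b) is a cleaner statement of what the paper expresses more informally as ``the derived item $F(X)$ has the same shape as a leaf item'' followed by ``by recursion'', but the underlying argument is the same.
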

\begin{proof}
A construction in $I$ can be presented as a tree (see Figure~\ref{fig:tree}).

\begin{figure}[ht]
\centering
\includegraphics[scale = 0.52]{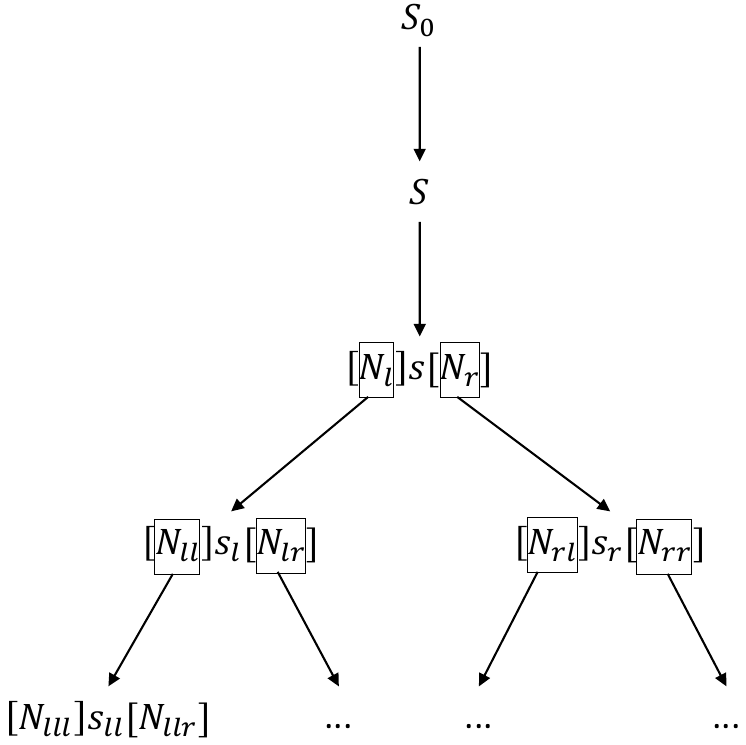}\caption{Here, each edge represents a derivation rule $N_w\mapsto [N_{wl}]s_w[N_{wr}]$, with a series $w\in\{r,l\}^*$ of indexes, $s_w$ a string of terminals and $N_{wl}$ and $N_{wr}$ the potential left and right NTs. Consequently, each edge represents a CFG word and has one outgoing edge for each of the word's NTs. This way the graph reflects the hierarchy of the resulting MG items}\label{fig:tree}
\end{figure}

The leaves of the tree have no NTs. So, following Instruction 5, they are initially created as simple MG items $\langle X::\mathcal{C}_X\rangle$ with their word/exponent $X$ and category $\mathcal{C}_X$. By Instruction 6 they may gain a list of licensees, so they finally have the shape $\langle X::\mathcal{C}_X, -a_1,\ldots,-a_n\rangle$ in the MG lexicon. All words in the tree's lowest tier are of this shape. In the second lowest tier, there is at least one word $[N_{wl}]s_w[N_{wr}]$ containing at least one NT, otherwise there would not be a tier below.

Let $F$ be such a word of 2nd lowest tier and let $X$ be the leaf word that is derived from $F$'s NT. In case $F$ is a two-handed free word, let $X$ represent the two leaf words $(X_1,X_2)$ that are derived from $F$'s NTs. The shape of $F$ is $\langle F :: {=}\mathcal{C}_X, \mathcal{C}_F\rangle$ or $\langle F :: {=}\mathcal{C}_{X1}, {=}\mathcal{C}_{X2}, \mathcal{C}_F\rangle$ or $\langle F :: {=}\mathcal{C}_X, +a_X, \mathcal{C}_F\rangle$, if $F$ is right-free or 2-handed-free or left-restricted respectively.

The MG items of $F$ and $X$ have the same shape as the respective items in Section~\ref{basic-wordshapes} - $F$ being the merger and $X$ the mergee(s) respectively - except that each of them may hold more licensees in their feature list than required.

If $F$ has such a list of licensees, it would not impact the construction. In the structure-building rules shown in Subsection 2.1 one can see that the merger or mover may have a chain ($t$ or $t_1$) that would persist any merge- or move-operation.

If (one of the components of) $X$ with the shape $\langle X:: \mathcal{C}_X,-a_1,\ldots,-a_n\rangle$ has an overshoot of licensees, it can be adapted as described in Section 4. This way it can get an adapted shape $\langle X:: \mathcal{C}_X\rangle$ or $\langle X:: \mathcal{C}_X,-a_i\rangle$ for $i\lneq n$, so the construction would work as intended.

In order to use the licensee $-a_n$ one can use $n-1$ remove adapters to produce the shape $\langle\epsilon: \mathcal{C}_X\rangle, \langle X: -a_n\rangle$. Then the construction can be continued as in Figure~\ref{fig:vierundzwanzig} and again produce the intended output. In its way, it deviates slightly from the cases in Section~\ref{basic-wordshapes}, as a final merge1 on $\langle\epsilon: \mathcal{C}_X\rangle, \langle X: -a_n\rangle$ replaces a non-final merge3 on $\langle X: \mathcal{C}_X, -a_n\rangle$. However, both operations lead to the same outcome.

In order to use another licensee $-a_i$, one can use $i-1$ remove shapers and one select shaper to produce the shape $\langle\epsilon: \mathcal{C}_X\rangle, \langle X: -a_i\rangle$.

In order to use no licensee, one can use $n$ remove shapers to produce the shape $\langle\epsilon: \mathcal{C}_X\rangle, \langle X: -a_i\rangle$.

In either case, a shape could be produced that can undergo the same operations as in Section~\ref{basic-wordshapes}.
Thus, the derivation of leaf words from the NT(s) of an item of 2nd lowest tier can be performed as a construction in MG format.

Once the derivation is performed a derived MG item $\langle F(X):\mathcal{C}_F, a_1,\ldots,a_n\rangle$ is obtained with the exponent $F(X)\in\{FX,XF,X_2FX_1\}$.
The derived item $F(X)$ has the same shape as a leaf item, except that it is derived. This difference cannot matter anymore, as it would only split the difference between triggering a merge1 or a merge2. But $F(X)$ does not have any selectors to trigger a merge, so it behaves just like a leaf item in advance. By recursion, we can conclude that derivations of 3rd lowest tier words' NTs to 2nd lowest tier words - and generally $n$th tier words' NTs to $(n+1)$th tier words - can also be performed as intended.

Hence each arbitrary construction in $I$ - as presented in the tree - is also constructable in $G$.
\end{proof}

Before we can state and prove the reverse of this theorem, we need many auxiliary results.

\subsection{Auxiliary Results}
\subsubsection{Classification of Lexical Items in \(G\) and Merge Criteria}
In order to delimit what $G$ could potentially generate, we aim to classify what merges can happen in the construction of a legal expression in $G$. Merges are the operations that make the difference, while Moves are just automatic consequences of Merges.

First, we profile the lexical items of $G$ based on the instructions:

\begin{observation}\label{itemshape}
Each lexical item in $G$ has the shape profile:
\begin{align*}
    \langle E :: A_1,\ldots,A_n, \mathcal{C}, -t_1,\ldots,-t_m\rangle,
\end{align*}
where
\begin{itemize}
    \item $m,n\in\mathbb{N}_0$, $n\leq 2$
    \item $E$ is the exponent
    \item for $i=1,\dotsc,n$: $A_i$ is a selector-licensor-block ${=}\mathcal{C}_i, +a_{i,1},\dotsc,+a_{i,l(i)}$, where
    \begin{itemize}
        \item $l(i)\in\mathbb{N}_0$
        \item ${=}\mathcal{C}_i$ is the selector of some category $\mathcal{C}_i$
        \item $+a_{i,1},\dotsc,+a_{i,l(i)}$ are licensors
    \end{itemize}
    \item $\mathcal{C}$ is the category
    \item $-t_1,\dotsc,-t_m$ are licensees
\end{itemize}
\end{observation}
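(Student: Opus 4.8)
The plan is to prove this observation by a direct case analysis over the only instructions that actually introduce lexical items into $G$. Inspecting Section~\ref{summary}, the item-creating instructions are Instruction~5 (word items), Instruction~6 (which appends licensees to word items), Instruction~7 (remove adapters) and Instruction~8 (select adapters); every other instruction operates on the underlying CFG rather than on the MG lexicon. Since lexical items are exactly those placed into the lexicon by these four instructions, it suffices to verify that each item produced by Instructions~5, 7 and 8 already matches the claimed profile, and that the licensee-appending step of Instruction~6 preserves it.

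First I would treat Instruction~5, which creates word items in exactly four shapes. The right-free shape $\langle word :: {=}\mathcal{C}(X), \mathcal{C}\rangle$ fits with $n=1$, $l(1)=0$, $m=0$; the two-handed-free shape $\langle word :: {=}\mathcal{C}(X), {=}\mathcal{C}(Y), \mathcal{C}\rangle$ fits with $n=2$ and $l(1)=l(2)=0$; the left-restricted shape $\langle word :: {=}\mathcal{C}(Y), +a_Y, \mathcal{C}\rangle$ fits with $n=1$, $l(1)=1$; and the purely terminal shape $\langle word :: \mathcal{C}\rangle$ fits with $n=0$. In each case there is a single category following the (at most two) selector-licensor blocks, and no licensees yet, so $m=0$. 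Instruction~6 then appends one licensee $-a_R$ per restricted NT $R$ generating the word; these are attached behind the category, producing precisely the trailing block $-t_1,\dots,-t_m$. As this step touches only word items and only adds licensees at the end, the profile is preserved.

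Next I would check the two adapter instructions. A remove adapter $\langle\epsilon :: {=}\mathcal{C}, +a, \mathcal{C}\rangle$ has one selector-licensor block ${=}\mathcal{C},+a$ (so $n=1$, $l(1)=1$), a single category, and no licensees ($m=0$). A select adapter $\langle\epsilon :: {=}\mathcal{C}, +x, +t, \mathcal{C}, -x\rangle$ has one selector-licensor block ${=}\mathcal{C}, +x, +t$---the selector ${=}\mathcal{C}$ followed by the licensors $+x$ and the sequence $+t$---then the single category $\mathcal{C}$, and exactly one trailing licensee $-x$, giving $n=1$, $m=1$, and $l(1)$ equal to one plus the length of $t$. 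Both conform.

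Since these four instructions exhaust the ways lexical items arise, the observation follows. I do not expect any genuine obstacle here; the only points demanding care are bookkeeping ones: confirming that $n\le 2$ (word items carry at most two selectors and adapters exactly one), that every licensor sits directly after a selector so the feature list genuinely decomposes into selector-licensor blocks rather than free-floating licensors, and that no instruction ever inserts a licensee before the category or a second category---all of which are immediate from the verbatim shapes above. The most easily overlooked case is the select adapter, where one must recognize that the entire run ${=}\mathcal{C},+x,+t$ is a \emph{single} block rather than a block followed by stray licensors.
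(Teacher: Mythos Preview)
Your proposal is correct and matches the paper's approach: the paper states this as a bare observation, justified only by the preceding sentence ``we profile the lexical items of $G$ based on the instructions,'' and leaves the case-by-case verification to the reader. Your explicit walk-through of Instructions~5--8 is exactly that verification made precise.
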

More specifically, we can differentiate:
\begin{observation}[Classification of lexical items]\label{profiling}
Each lexical item in $G$ belongs to one of the types listed in the following table: 
\begin{table}[H]
\begin{longtable}{ | m{2cm}| l | m{5.3cm}| }
\hline
Type & Shape & Notable properties\\
\hline
Word items &  $\langle E:: [A_1],[A_2],\mathcal{C},-t_1,\dotsc,-t_m\rangle$ & \makecell[l]{$n\leq 2$\\ $n=1\Rightarrow l(1)\leq 1$\\ $n=2\Rightarrow l(1)=l(2)=0$\\ Represents a word in $I$}\\
\hline
Remove adapters & $\langle \epsilon :: {=}\mathcal{C}, +a, \mathcal{C}\rangle$ & $E=\epsilon$, $n=1$, $\mathcal{C}_1=\mathcal{C}$, $l(1)=1$ and $m=0$\\
\hline
Select adapters & $\langle \epsilon :: {=}\mathcal{C}, +a_1,\dotsc,+a_l, \mathcal{C}, -a_1\rangle$ & $E=\epsilon$, $n=1$, $\mathcal{C}_1=\mathcal{C}$, $l(1)\geq 2$, $m=1$ and $-t_1$ corresponds to $+a_{1,1}$.\\
\hline
\end{longtable}\caption{Classification of lexical items}\label{table:class}
\end{table}
\label{tab:MGitemtypes}
The variables $m,n\ldots$ in Table~\ref{table:class} refer to the notation in Observation~\ref{itemshape}.
\end{observation}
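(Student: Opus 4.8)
The plan is to verify the claim by a direct case analysis over the instructions that actually create MG lexical items. The first step is to observe that, among all eight instructions, only Instructions 5--8 produce items: Instructions 1--4 operate purely on the CFG $I$ and introduce no MG syntax, so every lexical item of $G$ is created either by Instruction 5 (possibly with licensees appended by Instruction 6), by Instruction 7, or by Instruction 8. It therefore suffices to inspect the output shapes of these three sources and match each against one row of Table~\ref{table:class}, confirming the side conditions on $n$, $l(i)$ and $m$ from Observation~\ref{itemshape}.

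For the word items I would treat the four subcases of Instruction 5 separately. A terminal word yields $\langle word::\mathcal{C}\rangle$, i.e.\ $n=0$; a right-free word yields $\langle word::{=}\mathcal{C}(X),\mathcal{C}\rangle$, i.e.\ $n=1$ with $l(1)=0$; a left-restricted word yields $\langle word::{=}\mathcal{C}(Y),+a_Y,\mathcal{C}\rangle$, i.e.\ $n=1$ with $l(1)=1$; and a two-handed-free word yields $\langle word::{=}\mathcal{C}(X),{=}\mathcal{C}(Y),\mathcal{C}\rangle$, i.e.\ $n=2$ with $l(1)=l(2)=0$. In every case the licensee suffix $-t_1,\dots,-t_m$ is initially empty, and Instruction 6 only appends licensees at the back, leaving $n$ and the block lengths $l(i)$ untouched. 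Hence each such item satisfies $n\le 2$, the implication $n=1\Rightarrow l(1)\le 1$, and the implication $n=2\Rightarrow l(1)=l(2)=0$, and by construction it represents a word of $I$; this is exactly the Word-items row.

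The adapter rows are then immediate. Instruction 7 produces literally $\langle\epsilon::{=}\mathcal{C},+a,\mathcal{C}\rangle$, so $E=\epsilon$, $n=1$, $\mathcal{C}_1=\mathcal{C}$, $l(1)=1$ and $m=0$, matching the Remove-adapter row. Instruction 8 produces literally $\langle\epsilon::{=}\mathcal{C},+x,+t,\mathcal{C},-x\rangle$, and since the sequence $-t$ is required to be non-empty we get $l(1)=|t|+1\ge 2$, together with $E=\epsilon$, $n=1$, $\mathcal{C}_1=\mathcal{C}$, $m=1$, and the single licensee $-x=-t_1$ corresponding to the first licensor $+x=+a_{1,1}$, matching the Select-adapter row.

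There is no deep obstacle here; the observation is essentially a bookkeeping lemma that fixes the vocabulary for the merge-criteria results that follow. The only points demanding care are checking that Instruction 6 genuinely preserves all the structural constraints (it touches only the licensee suffix, never a selector-licensor block) and that the non-emptiness requirement on $-t$ in Instruction 8 is what forces $l(1)\ge 2$, thereby keeping select adapters formally distinct from remove adapters. Since the three types are specified by shape rather than by a mutually exclusive partition---a word item may even carry the empty exponent, as with the item for $O\mapsto\epsilon$---I would argue only membership in at least one row and not attempt to prove disjointness.
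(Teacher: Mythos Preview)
Your proposal is correct and matches the paper's treatment: the paper states this as an observation without a separate proof, since it follows by direct inspection of Instructions 5--8, exactly the case analysis you carry out. Your additional remarks---that Instruction 6 only appends licensees and hence preserves the block structure, that the non-emptiness of $-t$ in Instruction 8 is what forces $l(1)\ge 2$, and that only membership (not disjointness) is being claimed---are all accurate and make explicit what the paper leaves implicit.
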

\begin{note}
    The item $\langle \epsilon :: \mathcal{C}nix\rangle$ is also profiled as a word item in Observation~\ref{profiling}, as it represents the word $\epsilon$ of the rule $O\mapsto \epsilon ~\#\mathcal{C}nix$.
\end{note}
Next, we analyse under which conditions two items can merge.
\begin{theorem}\label{merge-criteria}
Let $f,x$ be lexical or derived\footnote{We used notation $\cdot$ for either '$::$' (lexical) or '$:$' (derived), see Sect. \ref{stablermgs}} items of $G$ with the shape
\begin{align*}
    f=\langle E \cdot A_1,\ldots,A_n ,\mathcal{C}_f ,-t_1,\ldots,-t_m\rangle\\
    x=\langle E' \cdot A'_1,\ldots,A'_{n'} ,\mathcal{C}_x ,-t'_1,\ldots,-t'_{m'}\rangle
\end{align*}
with
\begin{align*}
    A_j={=}\mathcal{C}_j,+a_{j,1},\ldots,+a_{j,l(j)}\text{ for }j=1,\ldots,n\\
    A'_j={=}\mathcal{C}'_j,+a'_{j,1},\ldots,+a'_{j,l'(j)}\text{ for }j=1,\ldots,n'.
\end{align*}
If $f$ and $x$ merge - as merger and mergee respectively - at any point of the construction of a legal derivation tree in $I$, then
\begin{enumerate}
    \item $n'=0$ and $\mathcal{C}_1=\mathcal{C}_x$.
    \item $m'\geq l(1)$ and $a_{1,i}=t'_i\ \forall i\leq l(1)$. 
    \item if $I$ is category-recursion free and $\mathcal{C}_f\neq \mathcal{C}_x$, then $m'=l(1)$ 
\end{enumerate}
\end{theorem}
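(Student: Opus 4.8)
The plan is to establish the three assertions in the order given, since each relies on the structure exposed by the previous one. Throughout I would lean on Observations~\ref{itemshape} and~\ref{profiling} to constrain the admissible feature lists, and on the determinism of the structure-building rules recalled in Section~\ref{stablermgs}: the foremost feature of the head alone dictates which operation fires, and at most one chain may present a given licensee as its foremost feature. Assertion~1 is then almost immediate. In each of merge-1, merge-2 and merge-3 the triggering feature of the merger is a selector and the mergee must expose its \emph{category} as its foremost feature. Since $f$'s foremost feature is the selector ${=}\mathcal{C}_1$ heading the block $A_1$, and $x$'s foremost feature is whatever precedes $\mathcal{C}_x$, legality forces $x$ to carry no selector-licensor block before its category, i.e. $n'=0$, together with the match $\mathcal{C}_1=\mathcal{C}_x$.

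For Assertion~2 I would induct on $i=1,\dots,l(1)$, showing that the $i$-th licensor $+a_{1,i}$ of $f$ is discharged by a move that consumes exactly the $i$-th licensee $-t'_i$ of $x$, so that $a_{1,i}=t'_i$ and, in particular, $m'\ge l(1)$. For the base case, if $m'\ge 1$ the merge must be the non-final merge-3, because merge-1 and merge-2 require the mergee's category to be its \emph{only} feature; hence right after the merge the head carries $+a_{1,1}$ foremost while $x$ sits in the chain with $-t'_1$ foremost. Determinism forces a move, and I must argue its target is $x$. The inductive step then repeats the argument: each move is a move-2 while further licensees of $x$ remain and a move-1 on the last one, so $+a_{1,2}$ is matched against $-t'_2$, and so forth. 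Since a completed derivation leaves no licensor undischarged, the licensors cannot outrun the licensees, giving $m'\ge l(1)$.

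Assertion~3 is the substantive part. Here $\mathcal{C}_f\neq\mathcal{C}_x$ isolates the case in which $f$ is a \emph{word} item (adapters preserve their category), so in the CFG $f$'s word contains a non-terminal deriving $x$'s word, whence $\mathcal{C}_f\succ\mathcal{C}_x$ in the order~\eqref{catorder}. I would argue by contradiction: suppose $m'>l(1)$, so that after the forced moves of Assertion~2 the item $x$ still carries a leftover licensee $-t'_{l(1)+1}$ in the chain while the head has moved on to category $\mathcal{C}_f$. For the final expression to be legal this leftover must eventually be discharged by a licensor $+t'_{l(1)+1}$; by Observation~\ref{profiling} every item bearing that licensor is either an adapter of category $\mathcal{C}_x$ or a left-restricted word item whose selector targets $\mathcal{C}_x$. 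In either case discharging the leftover requires the construction to revisit category $\mathcal{C}_x$ \emph{above} the present $\mathcal{C}_f$-subtree, which forces $\mathcal{C}_x\succ\mathcal{C}_f$. Combined with $\mathcal{C}_f\succ\mathcal{C}_x$ this contradicts irreflexivity of $\succ$, which holds precisely because category-recursion-freeness makes $\succ$ a strict partial order (the Note following the definition of category-recursion free). Hence no leftover survives and $m'=l(1)$.

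The two steps I expect to cost genuine work are the target-uniqueness step inside Assertion~2 and the category bookkeeping in Assertion~3. For the former, one must rule out that a companion chain steals the move from $x$: the companion chains ${\bf q_1}$ (carried by $f$) and ${\bf q_2}$ (carried by $x$) could in principle present a competing $-a_{1,1}$, so I would isolate this as an auxiliary invariant asserting that chains accompanying a merger and a mergee present no licensee matching the block-$A_1$ licensors of $f$. For the latter, the delicate point is turning the informal claim ``the leftover licensee must be handled later'' into the precise relation $\mathcal{C}_x\succ\mathcal{C}_f$, which requires tracing the category of the head as the chain bearing $x$'s leftover is passed upward through successive merges until the construction could again reach $\mathcal{C}_x$. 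This is the main obstacle, and it is exactly where category-recursion-freeness does the essential work.
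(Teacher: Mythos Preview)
Your approach to Parts~1 and~2 is essentially the paper's. The paper's proof of Part~2 is terser: it writes out the post-merge chain explicitly and argues by induction that each $+a_{1,i}$ must consume $-t'_i$ or the derivation dead-ends. The paper does \emph{not} address your target-uniqueness worry; it simply presents the result of the merge as a two-element chain with no companions, tacitly assuming that $f$ and $x$ arrive chain-free. So your flagged invariant is actually sharper than what the paper makes explicit.

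For Part~3 you take a genuinely different route. The paper does not invoke the order $\succ$ directly. Instead it postpones Part~3, develops four lemmas---adapted derivations preserve category and can only lose licensees (Lemma~\ref{shapedderiva}); every merger is an adapter or (a self-triggered derivation of) a word item (Lemma~\ref{classmerg}); every derived item is an adapted derivation of some lexical word item (Lemma~\ref{classderiva}); and if a derivation returns to its starting category then every intermediate merger was an adapter (Lemma~\ref{MG-cat-recursion})---and then argues: the leftover $-t$ can only be discharged once the head has regained category $\mathcal{C}_x$, but by Lemma~\ref{MG-cat-recursion} that would force every merger between $\overline{x}$ and that point to be an adapter, contradicting the presence of the word item $f$ with $\mathcal{C}_f\neq\mathcal{C}_x$. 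Your $\succ$-based contradiction is morally the same, but two of your assertions implicitly lean on this machinery: to claim that every licensor $+t'_{l(1)+1}$ sits behind a selector ${=}\mathcal{C}_x$ you need that the licensee $-t'_{l(1)+1}$ originated on a lexical word item of category $\mathcal{C}_x$ (the head word of $x$), which is precisely Lemmas~\ref{classderiva} and~\ref{shapedderiva}; and to extract $\mathcal{C}_x\succ\mathcal{C}_f$ from ``the head eventually revisits $\mathcal{C}_x$'' you must translate each intermediate merge-by-a-word into a one-step $\succ$ relation (this is the paper's Lemma~\ref{input-criterion}) and then use transitivity. The paper's auxiliary lemmas are reused in the subsequent exactness proof, so the scaffolding is not wasted; your direct route is cleaner for this theorem in isolation but would have to reproduce the same facts inline.
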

\begin{proof}
\begin{enumerate}
    \item If $n'\neq 0$, then there is a selector in front of the category, which would have to be triggered before the merge. If $\mathcal{C}_1\neq \mathcal{C}_x$, then the merger's selector and the mergee's category do not correspond, so a merge would be illegal.
    \item A merge of $f$ with $x$ produces
    \begin{align*}
    \langle E: +a_{1,1},\ldots,+a_{1,l(1)}, A_2,\ldots,A_n,\mathcal{C}_f,-t_1,\ldots,-t_m\rangle,\langle E':-t'_1,\ldots,-t'_{m'}\rangle.
    \end{align*}
    Then $+a_{1,1}$ would trigger a move with a corresponding $-a_{1,1}$. If $-t'_1\neq -a_{1,1}$, the move could not be performed, so the construction would meet a dead end and no legal derivation tree could be formed. Thus $-t'_1=-a_{1,1}$ and a move would be performed. After that, a corresponding licensee for $+a_{1,2}$ would be needed, which must be $-t'_2$. By induction, $-t'_i$ has to correspond to $a_{1,i}$ for $i=1,\ldots,l(1)$, so $l(1)\leq m'$.
    \item {\bf Proven later} at the end of \ref{CLFinG}.
\end{enumerate}
\end{proof}

We will prove Statement 3 of Theorem~\ref{merge-criteria} at the end of \ref{CLFinG}. First, we need more definitions and auxiliary results. All auxiliary results used to prove it are also arranged in the dependency graph in Figure~\ref{fig:aux-th-graph}.  


\subsubsection{Classification of Derived Items in \(G\)}
Recently, we have classified lexical items in $G$. Next, we go ahead and classify derived items. We only consider derived items upon which a legal expression is derivable, while we might not always mention this restriction.

\begin{definition}[Self-Triggered and (Purely) Externally Triggered Derivation]
Let $x$ be a (lexical or derived) item in $G$ and let $y$ be an item derived from $x$, including the possibility $y=x$.
\begin{itemize}
    \item $y$ is a \emph{self-triggered} derivation of $x$, if all merges involved in the derivation are triggered by $x$'s selectors. If $p_1,\ldots,p_n$ with $n\leq 2$ are the mergees of the self-triggered derivation chronologically, we write
    \begin{align*}
        y=x(p_1,\ldots,p_n).
    \end{align*}
    \item $y$ is a \emph{purely externally triggered} derivation of $x$, if no merges involved in the derivation are triggered by $x$'s selectors. Let $m_1,\ldots,m_n$ be the - lexical or derived - mergers of the purely externally triggered derivation chronologically. If all $m_i$ have only one selector, then we can write
    \begin{align*}
        y= m_n\bigr(\ldots m_2\bigr(m_1(x)\bigl)\bigl).
    \end{align*}
    However, if some $m_i$ has 2 selectors, then $m_{i+1}$ cannot merge it, until $m_i$ has triggered its 2nd selector to merge with some co-mergee $q_i$. So, generally we need to write
    \begin{align}\label{pureextern}
        y= m_n\bigl(\ldots \bigl(m_2\bigl(m_1(x,[q_1]),[q_2]\bigr),[q_3]\bigr)\ldots\bigr).
    \end{align}
    By defining
    \begin{align*}
        m^q(x):=m(x,q),
    \end{align*}
    we can shorten Equation~\ref{pureextern} to
    \begin{align*}
        y = m_n^{[q_n]}\bigl(...m_1^{[q_1]}(x)\bigr)\text{ or}\\
        y = m_n^{[q_n]}\circ...\circ m_1^{[q_1]}(x),
    \end{align*}
    where $m^{[q]}(x)$ implies that $q$ may not necessarily exist.
    \item $y$ is an \emph{externally triggered} derivation of $x$, if not all merges involved in the derivation are triggered by $x$'s selectors. If chronologically $m_1,\ldots,m_n$ are the mergers, $q_i$ is the potential co-mergee of $m_i$ and $p_1,\ldots,p_k$ are the mergees merged by $x$
    , we write
    \begin{equation}\label{genDeriva}
    \begin{aligned}
        y = m_n^{[q_n]}\circ...\circ m_1^{[q_1]}\bigl(x(p_1,...,p_k)\bigr)
    \end{aligned}
    \end{equation}
\end{itemize}
Generally, from now we will often often write $\overline{y}$ to name an item that $y$ is derived from and $\hat{x}$ for an item derived from $x$.
\end{definition}


\begin{note}
    For a - lexical or derived - item $x$ in $G$
    \begin{itemize}
        \item any derivation $y$ can be presented as in Equation~\ref{genDeriva} with $n,k\in\mathbb{N}_0$. It follows that any derived item $y$ is a self-triggered derivation of some item $\overline{y}$. In the equation, $\overline{y}$ is $m_n$, if $n>0$, and $\overline{y}=x$ otherwise.
        \item a self-triggered derivation of $x$ has the same category and licensees as $x$.
    \end{itemize}
\end{note}
\begin{definition}[Adapted derivation]
    An adapted derivation is an externally triggered derivation, where all mergers are adapter items.
\end{definition}
\begin{lemma}[Adapted derivations retain category and licensees]\label{shapedderiva}
    Let $x$ be an item and $y$ an adapted derivation of $x$ that is part of a legal derivation tree. Then, item $x$ has the same category as $y$ and $y$'s set of licensees is a subset of the licensees of $x$.
\end{lemma}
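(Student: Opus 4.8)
The plan is to reduce the statement to a single-step fact about one adapter merge and then induct on the number of adapter merges. First I would record the \textbf{one-step fact}: if $m$ is an adapter item and $\hat x$ is a (lexical or derived) mergee that $m$ legally merges within a legal derivation tree, then $m(\hat x)$ has the same category as $\hat x$, and its set of remaining licensees is contained in that of $\hat x$. Here "set of licensees" is read as the union of the untriggered licensees over all components of the (possibly chained) item, since moves only pass licensees between chain components and trigger them. I would prove the one-step fact by a case distinction on the two adapter types of Observation~\ref{profiling}, using parts (1) and (2) of Theorem~\ref{merge-criteria}.

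For a remove adapter $\langle \epsilon :: {=}\mathcal{C}, +a, \mathcal{C}\rangle$, Theorem~\ref{merge-criteria}(1) forces $\mathcal{C}(\hat x)=\mathcal{C}$ with no selector before $\hat x$'s category, and (2) forces the foremost licensee of $\hat x$ to be $-a$. So the operation is a non-final merge-3 producing $\langle \epsilon : +a, \mathcal{C}\rangle, \langle E': -a,\dots\rangle$, after which $+a$ triggers a move on $-a$. Whether this is a final move-1 (if $-a$ was the only licensee, collapsing to $\langle E':\mathcal{C}\rangle$) or a non-final move-2 (leaving $\langle \epsilon:\mathcal{C}\rangle,\langle E':\dots\rangle$), the head keeps category $\mathcal{C}$ and the licensee union can only lose the triggered $-a$; hence it shrinks to a subset.

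For a select adapter $\langle \epsilon :: {=}\mathcal{C}, +x, +t, \mathcal{C}, -x\rangle$, again $\mathcal{C}(\hat x)=\mathcal{C}$, and by Theorem~\ref{merge-criteria}(2) the foremost licensees of $\hat x$ are exactly $-x,-t$ in that order. The merge is a merge-3, and the licensors $+x,+t$ then trigger a chain of moves that consume $-x$ and all of $-t$ from $\hat x$, leaving a head of category $\mathcal{C}$ that carries the adapter's own licensee $-x$ (with any licensees lying behind $-t$ persisting on a tail component). The point to argue carefully is that the surviving $-x$ is \emph{not} a new licensee: by (2) it already occurred in $\hat x$, so as a set the licensees of $m(\hat x)$ equal those of $\hat x$ with the block $-t$ removed, hence a subset; the category remains $\mathcal{C}$ throughout.

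Finally I would assemble the lemma by induction on the number of adapter merges. Since adapters have a single selector (Observation~\ref{profiling}), an adapted derivation carries no co-mergees and, with $x$ a pure mergee, takes the nested form $y=m_n(\dots m_1(x)\dots)$ where each $m_i$ is an adapter; any self-triggered merges of $x$ preceding the adapters change neither category nor licensees, by the Note following the derivation definitions. The base case $n=0$ is trivial. For the step, apply the induction hypothesis to $y'=m_{n-1}(\dots m_1(x)\dots)$ and the one-step fact to $y=m_n(y')$: the category is preserved at each stage and the licensee sets form a decreasing chain, so $\mathcal{C}(y)=\mathcal{C}(x)$ and the licensees of $y$ are contained in those of $x$. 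The main obstacle is the select-adapter case of the one-step fact — specifically verifying that the adapter-contributed licensee $-x$ never escapes $\hat x$'s licensee set — while the remainder is routine bookkeeping of the structure rules.
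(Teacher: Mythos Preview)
Your proposal is correct and follows essentially the same strategy as the paper: reduce to a single adapter step (noting adapters have one selector, hence no co-mergees, and self-triggered operations preserve category and licensees), induct on the number of adapters, and in the one-step case split on remove versus select adapters, using Theorem~\ref{merge-criteria}(2) to argue that the select adapter's own licensee $-x$ must already occur as the mergee's foremost licensee. Your write-up is more explicit than the paper's about the merge/move mechanics and about reading the licensee set across chain components, but the underlying argument is the same.
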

\begin{proof}
    An adapted derivation can involve self-triggered operations and operations triggered by adapters. Self-triggered operations have no impact on category or licensees.

    Now it suffices to prove the theorem for $y$ being a purely externally triggered adapted derivation of $x$. Since all mergers are adapters, none of them has 2 selectors, so none of them merges a co-mergee. Thus $y=m_n\circ\ldots\circ m_1(x)$ with adapter items $m_1,...,m_n$.

    It suffices to prove the statement for just $n=1$ adapter item, so $y=m_1(x)$, then it follows by induction that any number $n\in\mathbb{N}$ of adapter items will never change the category or add new licensees.

    $m_1(x)$ has the same category as $x$, as adapters can only merge items of the category they hold, so they cannot change the category.

    Can $m_1$ bring additional licensees to $x$?

    Remove adapters have no licensees so they cannot enrich $x$'s list of licensees.

    Select adapters have one licensee $-a$, but they also hold the corresponding $+a$ as their foremost licensor in their only selector-licensor block. By Statement 2 of Theorem~\ref{merge-criteria} the select adapter can only merge $x$, if $x$ also has $-a$ as its foremost licensee. Then $x$'s $-a$ would be triggered by the adapters $+a$ in the consecutive move. So, select adapters do not enrich the list of licensees either.
\end{proof}
\begin{lemma}[Classification of merger items in $G$]\label{classmerg}
    When an item $x$ in $G$ triggers a merge during the construction of a legal derivation tree, then $x$ is either an adapter item or a word item or a self-triggered derivation of a word item.
\end{lemma}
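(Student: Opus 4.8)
The plan is to reduce everything to the \emph{head lexical item} of $x$ --- the lexical item whose still-unconsumed features sit at the front of $x$'s main chain --- and then to classify that head by Observation~\ref{profiling}. So first I would record the one fact that drives the whole argument: in order to trigger a merge, $x$ must present a selector as its foremost feature.

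By Observation~\ref{itemshape}, every lexical item lists its selector-licensor blocks strictly before its category, which in turn precedes all of its licensees. Since each of the structure-building rules of Section~\ref{stablermgs} consumes the foremost feature of the active item, features are consumed strictly front-to-back. Consequently the head lexical item $h$ of $x$ cannot yet have had its category consumed; otherwise only licensees would remain at the front and $x$ could not present a selector at all. In particular $h$ has never been a mergee, because being a mergee consumes exactly the category. Every selector of $h$ used so far was therefore used with $h$ acting as the \emph{merger}, i.e.\ in a self-triggered merge.

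This is precisely what it means for $x$ to be a self-triggered derivation of $h$ in the sense of the definition above: the mergees brought in by $h$'s selectors play the role of the $p_j$, and the moves triggered by $h$'s licensors are self-triggered as well. I would make this identification explicit, appealing to the Note that every derived item is a self-triggered derivation of some item and matching that item with $h$. By Observation~\ref{profiling}, $h$ is a word item, a remove adapter, or a select adapter. If $h$ is a word item, then $x$ is a self-triggered derivation of a word item (covering also the degenerate case $x=h$, a bare lexical word item), and this branch is finished.

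It remains to dispatch the adapter case, which I expect to be the only delicate point. Both remove and select adapters have, by Observation~\ref{profiling}, exactly one selector-licensor block and hence a single selector. If $x$ were a \emph{nontrivial} self-triggered derivation of such an adapter, that one merge would already have consumed the adapter's only selector, leaving its category at the front, so $x$ could not present a selector to trigger a merge --- contradicting the hypothesis. Hence, when the head is an adapter, $x$ must be the bare lexical adapter itself, i.e.\ an adapter item. The main obstacle is making the informal notions ``head lexical item'' and ``front-to-back consumption'' rigorous: one must argue that the selector triggering the merge necessarily belongs to the projecting head of the main chain (moving chains carry only licensees) and that no operation other than $h$'s own merges can ever consume $h$'s selectors. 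Once these points are pinned down via Observation~\ref{itemshape} and the structure rules, the three-way classification follows at once.
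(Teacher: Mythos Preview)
Your argument is correct and follows essentially the same route as the paper: both hinge on the observation that a merger must still carry a selector, so the lexical item it is a self-triggered derivation of must have had at least as many selectors as merges already performed plus one, which by Observation~\ref{profiling} forces that lexical item to be a word item whenever $x$ is properly derived. The paper is simply terser: it invokes the Note (every derived item is a self-triggered derivation of some lexical item) directly rather than rebuilding the ``head lexical item'' notion, and then counts selectors in one line, whereas you re-derive the head concept and handle the adapter case as a separate contradiction; the content is the same.
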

\begin{proof}
    Let $x$ be an item in $G$. $x$ needs a selector to trigger its merge. If $x$ is lexical, then by Observation~\ref{profiling} it can only be an adapter item or a word item.

    If $x$ is derived, then it is a self-derivation of some lexical $\overline{x}$. The lexical item $\overline{x}$ has at least two selectors, one for the merge to become $x$ and one for the merge that $x$ triggers. So, since only word items can have more than one selector, $x$ can only be a self-derivation of a word item $\overline{x}$, if it is derived.
\end{proof}
\begin{lemma}[Classification of derived items in $G$]\label{classderiva}
    Each derived item $x$ in $G$ - that is not lexical - is an adapted derivation of a lexical word item.
\end{lemma}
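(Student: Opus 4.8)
The plan is to induct on the number of merge operations in the derivation of $x$, combining the decomposition supplied by the note following Equation~\ref{genDeriva} with the classification of mergers in Lemma~\ref{classmerg}. By that note, $x$ is a self-triggered derivation of its outermost merger, so I can expose the last merge that produces $x$: some merger $f$ and some mergee $z$, where $x$ inherits $f$'s category and spine. The base case is a single merge of two lexical items, and the inductive step peels off this last merge and applies the hypothesis to $z$ (and, where it helps, to $f$). For each case I then reassemble $x$ either as a self-triggered derivation of a lexical word item or as such a derivation with adapters prepended.

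For the inductive step I would split on the type of $f$ given by Lemma~\ref{classmerg}: $f$ is an adapter, a lexical word item, or a self-triggered derivation of a lexical word item. If $f$ is a word item or a self-triggered derivation of one, with underlying lexical word item $h$, then the merge with $z$ is triggered by one of $h$'s own selectors, so $x=f(\ldots,z,\ldots)$ is again a self-triggered derivation of $h$. This settles those cases immediately, with an empty sequence of external mergers, i.e.\ the degenerate adapted derivation.

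The crux is the case where $f$ is an adapter. Here the merge is external from $z$'s point of view: because $f$ carries the empty exponent and, by Lemma~\ref{shapedderiva}, the operation preserves $z$'s category and can only delete a licensee, the exponent and spine of $z$ persist and $x$ is precisely $z$ adapted by $f$. I would then verify that $z$'s head is a lexical word item. If $z$ is lexical, Statement 1 of Theorem~\ref{merge-criteria} forces $z$ to have $n'=0$, i.e.\ no leading selector-licensor block; by the lexical classification in Observation~\ref{profiling} this excludes a lexical remove or select adapter and leaves a word item. If $z$ is derived, the induction hypothesis already gives that $z$ is an adapted derivation of a lexical word item $h$. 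In either case, prepending the adapter $f$ to the chain of external mergers keeps $x$ an adapted derivation of the same $h$.

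The \textbf{main obstacle} I anticipate is the bookkeeping that guarantees the external mergers are \emph{only} adapters and never word items acting on a word-headed core. This is exactly where Statement 1 of Theorem~\ref{merge-criteria} is needed, to rule out a lexical adapter ever occurring as a mergee, and where the hypothesis must be propagated through derived mergees to keep every head a word item. I would also track carefully the distinction between the final merges (merge-1 and merge-2) and the non-final merge-3, since this is what decides whether a given merge is recorded as self-triggered or as external, and hence whether the adapter chain is being extended correctly. Finally, I would state explicitly that the purely self-triggered derivations produced when $f$ is a word item are to be read as adapted derivations with no adapters, so that the statement of Lemma~\ref{classderiva} covers them uniformly.
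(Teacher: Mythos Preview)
Your inductive argument is correct and amounts to the same idea as the paper's proof: peel adapters off the top until a word merger is reached. The paper organizes this without induction. It first fixes a lexical word item $w$ occurring somewhere in the derivation tree (using exactly the Theorem~\ref{merge-criteria}.1 plus Observation~\ref{profiling} argument you give for a lexical mergee), writes $x=m_n^{[q_n]}\circ\ldots\circ m_1^{[q_1]}(\hat{w})$, invokes Lemma~\ref{classmerg} on each $m_i$, and then simply takes the largest index $z$ for which $\overline{m_z}$ is a word item; everything with index above $z$ is then an adapter, so $x$ is an adapted derivation of $\overline{m_z}$ (or of $w$ itself if no such $z$ exists). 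Your top-down recursion and the paper's ``take the last word merger in the chain'' are two packagings of the same step.

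Two small corrections to your final paragraph. First, external mergers \emph{are} permitted to be word items---that is exactly the case in which you terminate and declare that word the head---so there is no bookkeeping needed to forbid ``word items acting on a word-headed core''; the case split already absorbs it. Second, the self-triggered/external distinction is about whether the item under consideration plays merger or mergee in a given merge, not about which of merge-1/2/3 fires; tracking the merge variant is irrelevant for this lemma.
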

\begin{proof}
    Let $x$ be a non-lexical item. First, note that $x$ is derived from some word item. As $x$ is lexical, its construction involves an initial merge with a lexical mergee, which by Statement 1 of Theorem~\ref{merge-criteria} cannot have any selector. By Observation~\ref{profiling}, the initial mergee can only be a word, which $x$ is eventually derived from.

    Let $w$ be a lexical word item, which $x$ is derived from. Then $w$ has a self-triggered derivation $\hat{w}=w(\ldots)$ so that
    \begin{align*}
        x=m_n^{[q_n]}\circ\ldots\circ m_1^{[q_1]}(\hat{w})
    \end{align*}
    with mergers $m_1,\ldots,m_n$ and co-mergees $q_i$ for all $m_i$ with 2 selectors. 


    Each $m_i$ (first) merges $m_{i-1}^{[q_{i-1}]}\circ\ldots\circ m_1^{[q_1]}(x)$, regardless whether or not it has a co-mergee $q_i$. So by Lemma~\ref{classmerg}, each $m_i$ is either an adapter or (a self-triggered derivation of) a word $\overline{m_i}$. If all $m_i$ are adapters, then $x=m_n\circ\ldots\circ m_1(\hat{w})$ is an adapted derivation of the lexical word $w$. Otherwise, let $z$ be the largest index such that $\overline{m_{z}}$ is a word item. Then $m_{z+1},...,m_{n}$ are adapter items and $m_{z}$ is a self-triggered derivation of $\overline{m_{z}}$, so
    \begin{align*}
        x=m_n\circ\ldots\circ m_{z+1}\bigl(\underbrace{m_{z}\bigl(m_{z-1}(\ldots),[q_z]\bigr)}_{=\overline{m_z}(\ldots)}\bigr)
    \end{align*}
    is an adapted derivation of the lexical word $\overline{m_{z}}$.
\end{proof}
\begin{definition}[Head Word]
    The head word of a derived item in $G$ is the lexical word item of which it is an adapted derivation. The definition may also be used for a lexical word item, which is the head word of itself.
\end{definition}


\subsubsection{Impact of Category-Recursion in \(I\) on \(G\)}\label{CLFinG}
In this section we show that, if $I$ is category-recursion free, $G$ has a similar property.
\begin{lemma}[Detection of CFG rules]\label{input-criterion}
    Let $\phi$ and $\chi$ be words (right-hand sides) of CFG rules in $I$ and let $\mathcal{C}_{\phi}$ and $\mathcal{C}_{\chi}$ be their categories. Let
    \begin{align*}
        F=\langle f::A_{1},...,A_{n},\mathcal{C}_{\phi},\ldots \rangle\\
        \text{and }X=\langle x::\ldots \mathcal{C}_\chi,-t_1,\ldots,-t_m\rangle
    \end{align*}
    with $n\leq 2$
    be the MG items created with Instruction 5 representing the CFG words $\phi$ and $\chi$. Their exponents $f$ and $x$ equal $\phi$ and $\chi$ under omission of NTs respectively. 

    If one of the $A_{i}$ equals either ${=}\mathcal{C}_{\chi}$ or ${=}\mathcal{C}_{\chi},+t_j$ for some $j\leq m$, then there is a non-terminal $N$ that generates $\chi$ and is contained in the word $\phi$.
\end{lemma}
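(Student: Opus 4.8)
The plan is to read the claim straight off the way Instruction 5 manufactures the selector-licensor blocks of a word item, combined with the free/restricted dichotomy of Definition~\ref{def-free} and the licensee distribution of Instruction~\ref{distr-licensee}. The starting point is a bookkeeping observation: every block $A_i$ of a word item is created from exactly one non-terminal occurring in the CFG word it represents. In the right-free item $\langle word :: {=}\mathcal{C}(X),\mathcal{C}\rangle$ and in each of the two blocks of the two-handed-free item $\langle word :: {=}\mathcal{C}(X),{=}\mathcal{C}(Y),\mathcal{C}\rangle$ the block is the bare selector ${=}\mathcal{C}(N)$ of a \emph{free} NT $N$, whereas in the left-restricted item $\langle word :: {=}\mathcal{C}(Y),+a_Y,\mathcal{C}\rangle$ the block is ${=}\mathcal{C}(Y),+a_Y$ of a \emph{restricted} NT $Y$, carrying the single licensor $+a_Y$. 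By the word-item row of Observation~\ref{profiling} ($n=1\Rightarrow l(1)\le 1$ and $n=2\Rightarrow l(1)=l(2)=0$) these are the only shapes a block may take, so a block with no licensor always comes from a free NT and a block with exactly one licensor always comes from a restricted NT whose licensor is its $+a_N$. Applied to $F=\langle f :: A_1,\dots,A_n,\mathcal{C}_\phi,\dots\rangle$, this means each $A_i$ points to a concrete NT $N_i$ of $\phi$ whose target category $\mathcal{C}(N_i)$ is the category selected by $A_i$.

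Then I would split on the two permitted forms. If $A_i={=}\mathcal{C}_\chi$, its NT $N$ is free with target category $\mathcal{C}(N)=\mathcal{C}_\chi$; since $\chi$ is, by assumption, the right-hand side of some rule of category $\mathcal{C}_\chi$, and a free NT directly produces \emph{every} right-hand side occurring with its target category, there is a rule $N\mapsto\chi~\#\mathcal{C}_\chi$. Hence $N$ generates $\chi$ and, being the source of a block of $F$, occurs in $\phi$. If instead $A_i={=}\mathcal{C}_\chi,+t_j$, its NT $N$ is restricted with block licensor $+a_N=+t_j$, so $-t_j=-a_N$ is among the licensees $-t_1,\dots,-t_m$ of the item $X$ representing $\chi$. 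By Instruction~\ref{distr-licensee} the licensees of $X$ are exactly the $-a_R$ ranging over the restricted NTs $R$ that generate $\chi$; since $-a_N$ is one of them, $N$ generates $\chi$, and again $N$ occurs in $\phi$.

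The step I expect to need the most care is the one hidden in ``a free NT produces every right-hand side of its target category''. Definition~\ref{def-free} is phrased in terms of one-step rules, so I must verify that the normalization of Instruction~2 has genuinely collapsed all of $N$'s rules into a single category $\mathcal{C}(N)$, that $\mathcal{C}(N)$ is therefore well defined, and that the decomposition of Instruction~4 has not silently converted a free occurrence into a restricted one --- it has not, since the third and fourth decomposition exceptions are precisely what keep free NTs in no-licensor blocks while pushing restricted ones into left-restricted sub-rules. A secondary point to state explicitly is the uniqueness of the name $a_R$ per restricted NT $R$, which is what lets me identify $N$ from the licensor $+t_j$ in the second case; granting that, both cases reduce to a one-line application of the relevant instruction.
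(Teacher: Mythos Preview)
Your proposal is correct and follows essentially the same route as the paper's proof: a case split on whether the block $A_i$ carries a licensor, identifying the corresponding NT $N$ in $\phi$ via Instruction~5, and then concluding that $N$ generates $\chi$ either from the definition of a free NT (bare selector case) or from Instruction~\ref{distr-licensee} (selector-plus-licensor case). Your treatment is in fact slightly more careful than the paper's in isolating the two auxiliary facts that make the argument go through --- that after Instruction~2 each NT has a well-defined target category so Definition~\ref{def-free} yields the ``free $\Rightarrow$ produces everything of its target category'' implication, and that the licensor name $a_R$ determines $R$ uniquely --- but these are exactly the points the paper's proof uses implicitly.
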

\begin{proof}
    Assume that some $A_i$ in $F$ is ${=}\mathcal{C}_{\chi}$ without a licensor. Then reviewing Instruction 5, $F$ has gotten ${=}\mathcal{C}_{\chi}$ due to $\phi$ being right free or 2-handed free, so $\phi$ holds a free NT $N$ with target category $\mathcal{C}_{\chi}$. As $N$ is free, it generates any CFG word of category $\mathcal{C}_{\chi}$, so there is a rule $N\mapsto \chi~\#\mathcal{C}_{\chi}$.

    Assume that some $A_i$ in $F$ is ${=}\mathcal{C}_{\chi},+t_j$. Then reviewing Instruction 5, $F$ has gotten ${=}\mathcal{C}_{\chi}$ due to $\phi$ being left restricted, so $\phi$ holds a NT $N$ with target category $\mathcal{C}_{\chi}$ that is restricted with the licensor $+t_j$. Since $N$ is restricted it can only generate some of the CFG words of category $\mathcal{C}_{\chi}$. In the MG the word items of those CFG words are given a licensee $-t_j$ in Instruction 6. $\chi$'s MG item $X$ has category $\mathcal{C}_{\chi}$ and a licensor $+t_j$, so it is one of those that can be generated by $N$, so there is a rule $N\mapsto \chi ~\#\mathcal{C}_{\chi}$.
\end{proof}
Now we can interpret the impact of category-recursions in $I$ on $G$.
\begin{lemma}[Category-Recursions in MGs]\label{MG-cat-recursion}
Let $I$ be category-recursion free and let $a$ be an item in $G$ (derived or lexical) and $b$ an item derived from $a$, such that $\mathcal{C}(a)=\mathcal{C}(b)$ and $b$ is part of a legal derivation tree. Then $b$ is an adapted derivation of $a$.
\end{lemma}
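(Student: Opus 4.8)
The plan is to write $b$ as a general derivation of $a$ in the shape of Equation~\ref{genDeriva},
\[
b = m_n^{[q_n]}\circ\ldots\circ m_1^{[q_1]}\bigl(a(p_1,\ldots,p_k)\bigr),
\]
and then to prove that every external merger $m_1,\ldots,m_n$ is an adapter item; this is precisely the statement that $b$ is an adapted derivation of $a$ (the self-triggered prefix $a(p_1,\ldots,p_k)$ and the co-mergees $q_i$ are harmless, and the purely self-triggered case $n=0$ is the trivial sub-case with a vacuously-adapter list of external mergers). By Lemma~\ref{classmerg} each $m_i$ is already known to be either an adapter or a self-triggered derivation of a lexical word item, so the whole task reduces to excluding the second possibility.

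The engine of the argument is the category order $\succ$ of Equation~\ref{catorder}, which is a \emph{strict} partial order exactly because $I$ is category-recursion free. I would track the category $c_i$ of the construction after the $i$-th external merge, starting from $c_0=\mathcal{C}(a)$, which is legitimate since a self-triggered derivation preserves the category. An adapter merger leaves the category unchanged, because it carries matching selector ${=}\mathcal{C}$ and category $\mathcal{C}$ (cf.\ Lemma~\ref{shapedderiva}). The key step is to show that a word-item merger \emph{strictly raises} the category in $\succ$. Suppose $m_i$ is a self-triggered derivation of a lexical word item $M$ representing a CFG word $\phi$ of category $\mathcal{C}_\phi=\mathcal{C}(M)=c_i$, and let $x'$ be the item it consumes. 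By Lemmas~\ref{classderiva} and~\ref{shapedderiva}, $x'$ is an adapted derivation of a head word $W$ representing a CFG word $\chi$ with $\mathcal{C}_\chi=\mathcal{C}(W)=\mathcal{C}(x')=c_{i-1}$. Statements 1 and 2 of Theorem~\ref{merge-criteria} force the foremost selector-licensor block of $m_i$ to be ${=}c_{i-1}$ possibly followed by licensors matching $x'$'s leading licensees; since Observation~\ref{profiling} permits a word item at most one licensor per block, this block is exactly ${=}\mathcal{C}_\chi$ or ${=}\mathcal{C}_\chi,+t_j$, which is precisely the hypothesis of Lemma~\ref{input-criterion}. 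That lemma then produces a non-terminal $N$ inside $\phi$ with a rule $N\mapsto\chi~\#\mathcal{C}_\chi$, so by the definition in Equation~\ref{catorder} we get $\mathcal{C}_\phi\succ\mathcal{C}_\chi$, i.e.\ $c_i\succ c_{i-1}$.

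With these two facts the conclusion follows by a cycle argument. Along $c_0,c_1,\ldots,c_n$ each step is either an equality (adapter) or a strict increase $c_{i-1}\prec c_i$ (word item). If even a single $m_i$ were a word item, transitivity of the strict partial order would give $c_0\prec c_n$; but $c_0=\mathcal{C}(a)=\mathcal{C}(b)=c_n$, contradicting irreflexivity. Hence no $m_i$ is a word item, every external merger is an adapter, and $b$ is an adapted derivation of $a$.

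I expect the main obstacle to be the step that turns a single MG word-item merge into a genuine one-step CFG witness for $\succ$: one must match the foremost block of the (possibly self-triggered, possibly two-selector) merger to the exact hypothesis of Lemma~\ref{input-criterion}, and transport the categories of the derived items $x'$ and $m_i$ back to the underlying CFG words $\chi$ and $\phi$ via the head-word correspondence. Everything else -- the category bookkeeping across self-triggered merges and moves, and the final cycle argument -- is routine once this link is secured.
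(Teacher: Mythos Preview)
Your proof is correct and follows essentially the same route as the paper's: both write $b$ as a chain of external mergers applied to a self-triggered derivation of $a$, use Lemma~\ref{input-criterion} (via Lemmas~\ref{classderiva} and~\ref{shapedderiva} to pass to head words) to show that each word-item merger witnesses a strict $\succ$-step between the underlying CFG categories, and derive a contradiction with category-recursion freeness if any such merger occurs. The only cosmetic difference is that the paper extracts the word-item mergers into an explicit chain $\overline{w_0},\ldots,\overline{w_p}$ and argues that this chain is literally a category-recursion in $I$, whereas you invoke the strict partial order $\succ$ and its irreflexivity directly; the content is identical.
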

\begin{note}
    The consequence in Lemma \ref{MG-cat-recursion} can be interpretted as:
    \begin{itemize}
        \item Item $a$ has a self-triggered derivation $\hat{a}$, so that $\hat{a}$ and $b$ have the same exponent. Or
        \item During the derivation from $a$ to $b$, the category has remained the same all the time.
    \end{itemize}
\end{note}
\begin{proof}
Each self-triggered derivation of $a$ is an adapted derivation of $a$.
Therefore let $b$ be an externally triggered derivation of $a$.

Let $\hat{a}$ be the self-triggered derivation of $a$ without selectors or licensors, so that $b$ is a derivation of $\hat{a}$.



Let $\overline{a}$ be the head word of $\hat{a}$. Then $\overline{a}$ is also the head word of $a$. Note that $\hat{a}$, $a$ and $\overline{a}$ have the same category.

We have
\begin{align*}
    b=m_n^{[q_n]}\circ\ldots\circ m_1^{[q_1]}(\hat{a})
\end{align*}
with merger items $m_1,\ldots,m_n$ and co-mergees $q_i$.
We need to show that all $m_i$ are adapter items. 

By Lemma~\ref{classmerg}, each $m_i$ is either an adapter or (a self-triggered derivation of) a lexical word item.

Rename the $m_1,\ldots,m_n$ and the $q_i$ so that
\begin{align*}
b=d_{p,k(p)}\circ\ldots\circ d_{p,1}\circ &w_p^{[q_p]}\circ d_{p-1,k(p-1)}\circ\ldots\\&\ldots\\\ldots\circ d_{2,1}\circ &w_2^{[q_2]}\circ d_{1,k(1)}\circ\ldots\\ \ldots\circ d_{1,1}\circ&w_1^{[q_1]}\circ d_{0,k(0)}\circ\ldots\circ d_{0,1}(\hat{a})
\end{align*}
where for each $i$ the $w_i$ is a self-triggered derivations of a word item $\overline{w_i}$, $q_i$ is $w_i$'s potential co-mergee and the $d_{i,j}$ are the $k(i)$ adapter items used between $w_i^{[q_i]}$ and $w_{i+1}^{[q_{i+1}]}$.

Set $w_0=\hat{a}$ and $\overline{w_0}=\overline{a}$ and
\begin{align*}
    r_i = d_{i,k(i)}\circ\ldots\circ w_i^{[r_i]}\circ\ldots\circ d_{0,1}(\hat{a})
\end{align*}
for $i=0,\ldots,p$. Then $b=r_p$ and for $i=0,...,p-1$: $r_i$ is a mergee that the merger $w_{i+1}$ merges with in the construction of $b$.

Note that for $i=1,\ldots,p$,
\begin{itemize}
    \item $w_i$ is a self-triggered derivation of $\overline{w_i}$, so $\overline{w_i}$ has all selector-licensor-blocks that $w_i$ has,
    \item since $r_{i-1}$ is merged by $w_i$, by Statement 2 of Theorem~\ref{merge-criteria} the foremost selector-licensor block of $w_i$ corresponds to the category (and foremost licensee) of $r_{i-1}$ and
    \item $r_{i-1}$ 
    is an adapted derivation of $\overline{w_{i-1}}$, so by Lemma~\ref{shapedderiva} $\overline{w_{i-1}}$ has the category and all the licensees that $r_{i-1}$ has.
\end{itemize}
Hence, one selector-licensor-block of $\overline{w_i}$ corresponds to the category (and a licensee) of $\overline{w_{i-1}}$, so by Lemma~\ref{input-criterion}, the CFG word of $\overline{w_i}$ contains a NT that generates the CFG word of $\overline{w_{i-1}}$ for $i=1,\ldots,p$.

Now remember that
\begin{itemize}
    \item $b$ is $r_p$, which is an adapted derivation of $\overline{w_p}$, so $\mathcal{C}(b)=\mathcal{C}(\overline{w_p})$,
    \item $a$ has the same category as $\overline{a}$, and since $\overline{a}=\overline{w_0}$, we have $\mathcal{C}(a)=\mathcal{C}(\overline{w_0})$ and
    \item $\mathcal{C}(a)=\mathcal{C}(b)$, so we see that $\mathcal{C}(\overline{w_0})=\mathcal{C}(a)=\mathcal{C}(b)=\mathcal{C}(\overline{w_p})$, and the CFG items of $\overline{w_0},\ldots,\overline{w_p}$ would form a category-recursion in $I$, if $p>0$.
\end{itemize}
Since by assumption $I$ has no category-recursions, we know that $p=0$, so all mergers $m_1,\ldots,m_n$ are adapter items. 
\end{proof}

Now we have all tools to prove Statement 3 of Theorem~\ref{merge-criteria}.
\begin{proof}
We showed that Statement 2 is true because every licensor needs a corresponding licensee to merge with. Now we show that - if $I$ is category-recursion free and $\mathcal{C}\neq \mathcal{C}'$ - there cannot be more licensees than licensors.

If $m'>l(1)$, then after the merge and the moves triggered by the $+a_{1,1},\ldots,+a_{1,l(1)}$, we would have an item $g=f(x)$ of category $\mathcal{C}_f$ with at least one licensee $-t'_{m'}$, which from now we call $-t$. In order for this item to become a legal expression, $-t$ eventually needs to be triggered by some $+t$.

We show that this is no longer possible.

First we ask, in which case $x$ holds $-t$. If $x$ is lexical, it has to be a word item. Otherwise, as an adapter $x$ would hold a selector, so by Statement 1 of Theorem~\ref{merge-criteria} item $x$ could not be merged by $f$. If $x$ is derived, then by Lemma~\ref{classderiva} it is an adapted derivation of a word item $\overline{x}$. By Lemma~\ref{shapedderiva}, $\overline{x}$ then also holds $-t$. Summarized, $x$ is an adapted derivation of some word $\overline{x}$ of category $\mathcal{C}_x$ that holds $-t$. The items $x$ and $\overline{x}$ may be equal. 

Following Instruction 6, the purpose of giving $\overline{x}$ a licensee $-t$ is to allow $\overline{x}$ to be merged by some word item, that represents a CFG word containing a restricted NT for the (main) target category $\mathcal{C}_x$. Only items of the main target category $\mathcal{C}_x$ can hold $-t$.

Following Instructions 5-6, a corresponding $+t$ for $-t$ can then only be found behind a selector ${=}\mathcal{C}_x$, let it be in a word or an adapter.

Recall that $g=f(x)$ and $\mathcal{C}(g)=\mathcal{C}_f\neq \mathcal{C}_x$ and $f$ holds $-t$. 
In order for $g$ to get its $-t$ triggered, it needs a derivation $h$ of category $\mathcal{C}_x$ that could then be merged by a word or adapter with a selector-licensor block ${=}\mathcal{C}_x, +t$. Then $h$ is a derivation of $\overline{x}$ with $\mathcal{C}(h)=\mathcal{C}(\overline{x})$, so by Lemma~\ref{MG-cat-recursion}, it is an adapted derivation. 
This means that the category cannot have deviated from $\mathcal{C}(\overline{x})$ at any point in the derivation between $\overline{x}$ and $h$.
However, $f$ is an intermediate derivation of category $\mathcal{C}_f\neq \mathcal{C}_x=\mathcal{C}(\overline{x})$.

Hence, such a derivation $h$ cannot exist and $-t'_{m'}$ can no longer be removed after $x$ is merged by $f$, so a legal expression can no longer be formed.
\end{proof}
\subsection{Exactness}

We have shown that $G$ generates everything that $I$ does.

Now we use the auxiliary results - mainly Theorem~\ref{merge-criteria} - to show that $G$ generates nothing beyond what $I$ does, if $I$ has no category-recursions.

\begin{theorem}
If $I$ is category-hierarchical, then each legal expression of $G$ is a legal expression of $I$.
\end{theorem}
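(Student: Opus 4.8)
The plan is to decode any legal $G$-construction into a legal $I$-derivation with the same yield, proceeding by induction on the structure of the construction tree of a legal expression $l$ of $G$. Since Theorem~\ref{merge-criteria} has now been fully established---in particular Statement~3, whose proof is where the category-recursion freeness of $I$ is consumed---the merge criteria become the engine of the argument: every merge by a word item is forced to correspond exactly to the application of one CFG rule of $I$, with its non-terminal slot filled by a legally derivable sub-expression.

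First I would set up the decoding. By Lemma~\ref{classderiva}, every non-lexical item occurring in the construction of $l$ is an adapted derivation of a lexical word item---its head word---and by Lemma~\ref{shapedderiva} the adapter stretches between word-item merges neither change the category nor introduce new licensees. Hence the ``content'' of the construction is carried entirely by the word items, each representing a CFG word of $I$ (right-free, left-restricted, two-handed free, or terminal, by Instructions 4 and 5). I would read off from $l$'s construction a candidate $I$-derivation tree $D(l)$ by associating to each word item $w$ used as a merger the CFG rule whose right-hand side $w$ represents, and attaching as the sub-derivation of each non-terminal slot the decoding of the mergee that fills it.

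The core step is to verify that $D(l)$ uses only rules of $I$ and is legal. Consider a word item $f$ that merges a mergee $x$; for a word item one always has $\mathcal{C}(f)\neq\mathcal{C}(x)$ in the absence of category-recursion, since $\mathcal{C}(f)=\mathcal{C}(x)$ would, via Lemma~\ref{input-criterion}, force a length-one category-recursion in $I$. By Statement~1 of Theorem~\ref{merge-criteria}, $x$ has no selectors and its category matches $f$'s foremost selector, so the slot it fills has the correct target category. If $f$'s foremost block is a bare selector ${=}\mathcal{C}(X)$, the slot is a free non-terminal, and any word of that category may legally fill it. If the block is ${=}\mathcal{C}(X),+a_X$, the slot is a restricted non-terminal; Statement~2 forces $x$'s foremost licensee to be $-a_X$, and Statement~3 then forces $x$'s feature list to be \emph{exactly} $\mathcal{C}(x),-a_X$, so no stray licensee survives the merge. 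Combining Instruction 6 with Lemma~\ref{input-criterion}, the head word of $x$ carries $-a_X$ if and only if its CFG word is generatable by the restricted non-terminal $X$; thus the slot is filled only by a word that $X$ legally produces. The exponent placement (merge-1/merge-2 for free slots, merge-3 followed by move-1 for restricted slots, exactly as in Subsection~\ref{basic-wordshapes}) reproduces the terminal string $s_0X_1s_1\ldots X_ns_n$ in the correct order, so the yield of $D(l)$ equals the exponent of $l$. The induction then runs downward: the base case is a terminal word item decoding to a terminal CFG word, and the inductive step is the word-item merge just analysed, each mergee being decoded by the induction hypothesis as its head word sits strictly lower in the construction.

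The main obstacle is Statement~3, the exact consumption $m'=l(1)$ of the mergee's feature list: this is what forbids a word item from leaving a licensee untriggered to be discharged only after the construction has wandered back to the mergee's category---the mechanism behind misconstructions such as `applesour\_'. Since Statement~3 is already in hand, and is precisely where category-recursion freeness enters, the residual difficulty is bookkeeping rather than new ideas: confirming via Lemma~\ref{MG-cat-recursion} and Lemma~\ref{shapedderiva} that the adapter stretches between word-item merges can neither smuggle in a word outside the target category nor alter the licensee that gates a restricted slot, and checking that the order in which the move operations deposit exponents matches the order of the non-terminals in the CFG word.
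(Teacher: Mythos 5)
Your proposal is correct and follows essentially the same route as the paper's own proof: both hinge on Theorem~\ref{merge-criteria} (with Statement~3 carrying the category-recursion-freeness), Lemmas~\ref{classderiva}, \ref{shapedderiva} and \ref{input-criterion}, and the same case split of word-item merges into free slots (bare selector) versus restricted slots (selector plus licensor, exactly consumed), reducing every exponent-affecting operation to the scenarios of Subsection~\ref{basic-wordshapes}. The only difference is presentational---you phrase it as an explicit decoding map $D(l)$ with structural induction, whereas the paper enumerates all reachable (merger, mergee) shapes along leaf-to-root paths and tabulates them---but the underlying argument is the same.
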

\begin{proof}
Consider a legal derivation tree in $G$. We show that in any path from a leaf to the root, any operation that affects the exponent is analogous to one the operations described in Subsection~\ref{basic-wordshapes} or Figure~\ref{fig:vierundzwanzig}.

Any path of legal derivation tree in $G$ includes at the bottom an initial merge of a lexical mergee item, i.e. a lexical item without selectors and licensors. Let $x$ be such an item. Then each (legal) derivation of $x$ can be written
\begin{align}\label{derivoflexmergee}
    m_s^{[q_s]}\circ\ldots\circ m_1^{[q_1]}(x)
\end{align}
where the $m_i$ are the - lexical or derived - mergers in chronological order and the $q_i$ are the co-mergees for those $m_i$ that have 2 selectors.

We show that whenever an intermediate operation from \\$m_i^{[q_i]}\circ\ldots\circ m_1^{[q_1]}(x)$ to $m_{i+1}^{[q_{i+1}]}\circ\ldots\circ m_1^{[q_1]}(x)$ affects the exponent, it behaves like in Section~\ref{basic-wordshapes} or Figure~\ref{fig:vierundzwanzig}.
Since the mergee $x$ has no selectors, it has the shape $\langle E:: \mathcal{C},-t_1,\ldots,-t_l\rangle$, so by Observation~\ref{tab:MGitemtypes}, it has to be a word.
\begin{itemize}
    \item[] For the sake of generalization, we already note that it would not make a difference if it had either of the shapes
    \begin{equation}
    \begin{aligned}
        \langle E:: \mathcal{C},-t_1,\ldots,-t_l\rangle\\
        \langle E: \mathcal{C},-t_1,\ldots,-t_l\rangle\\
        \langle \epsilon: \mathcal{C}\rangle ,\langle E:-t_1,\ldots,-t_l\rangle
    \end{aligned}
    \label{mergee-shapes}
    \end{equation}
    They produce the exact same derivations once they get merged by some $\langle E'\cdot {=}\mathcal{C},\ldots\rangle$. In the latter case, the merge would be final merge-1 instead of a non-final merge-3, but in either case the merge would produce $\langle E': \ldots\rangle ,\langle E:-t_1,\ldots,-t_l\rangle$.
\end{itemize}
The merger, by Lemma~\ref{classderiva}, can be an adapter or a (self-triggered derivation of a) word item.\\
If the merger is an adapter:
\begin{itemize}
    \item[] By Theorem~\ref{merge-criteria}.2 it is the remove adapter or the select adapter of the first licensee $-t_1$.
    If it is
    \begin{itemize}
        \item[] the\footnote{There is a highly unlikely possibility that a select adapter with $n'<n$ fitting licensors exists and could be applied. This situation would imply a very bad sorting of licensees, albeit possibly unavoidable. In this case, a merge-3 and $m$ move-2s would produce $\langle\epsilon:\mathcal{C},-t_1\rangle,\langle E:-t_{n'+1},\ldots,-t_n\rangle$. This item would also behave like the items in Equation~\ref{mergee-shapes}} select adapter, one merge-3, $n-1$ times move-2 and a move-1 produce the shape $\langle E: \mathcal{C},-t_1\rangle$. As this is a special case of one of the mergee shapes in Equation~\ref{mergee-shapes}, the consecutive derivations can be described the same again.
        \item[] the remove adapter and
        \begin{itemize}
            \item[] $l=1$, then a merge-3 and a move-1 produce $\langle E: \mathcal{C}\rangle$
            \item[] $l>1$, then a merge-3 and a move-2 produce $\langle \epsilon :\mathcal{C}\rangle ,\langle E: -t_2,\ldots,-t_l\rangle$.
        \end{itemize}
    \end{itemize}
    In either case the product would be a special case of one of the mergee shapes in Equation~\ref{mergee-shapes}, so the consecutive derivations can be described the same again.
\end{itemize}
If the merger is (a self-derivation of) a word item, then the merger has a different category as the mergee, otherwise there would be a category recursion in the CFG. So, by Theorem~\ref{merge-criteria}.3, we have:

{\bf The merger can only be (a self-derivation of) a word if the first selector-licensor block in its feature list exactly matches the mergee's feature list.} \\
More precisely, then the number $l$ of the merger's licensors is always $0$ or $1$, depending on whether the word's NT is free or restricted. So, the mergee can only be merged by a (a self-derivation of) a word if it is $\langle E\cdot \mathcal{C}\rangle$ or $\langle E\cdot \mathcal{C},-t\rangle$ or $\langle\epsilon :\mathcal{C}\rangle ,\langle E: -t\rangle$.\\
With the present results we can completely describe\footnote{Application of a select shaper with less than $n$ licensors not included in the graph. Then $\langle\epsilon:\mathcal{C},-t_1\rangle,\langle E:-t_{n'+1},\ldots,-t_n\rangle$ would be produced, a construction that would re-enter the graph latest when it has only one licensor left.} in Figure~\ref{pathsofadaptedderivation} what a lexical item - or more generally: an item without selectors and licensors - could derive until it gets merged by another (self-derivation of a) word item.
\begin{figure}[H]
\centering
\includegraphics[scale = 0.6]{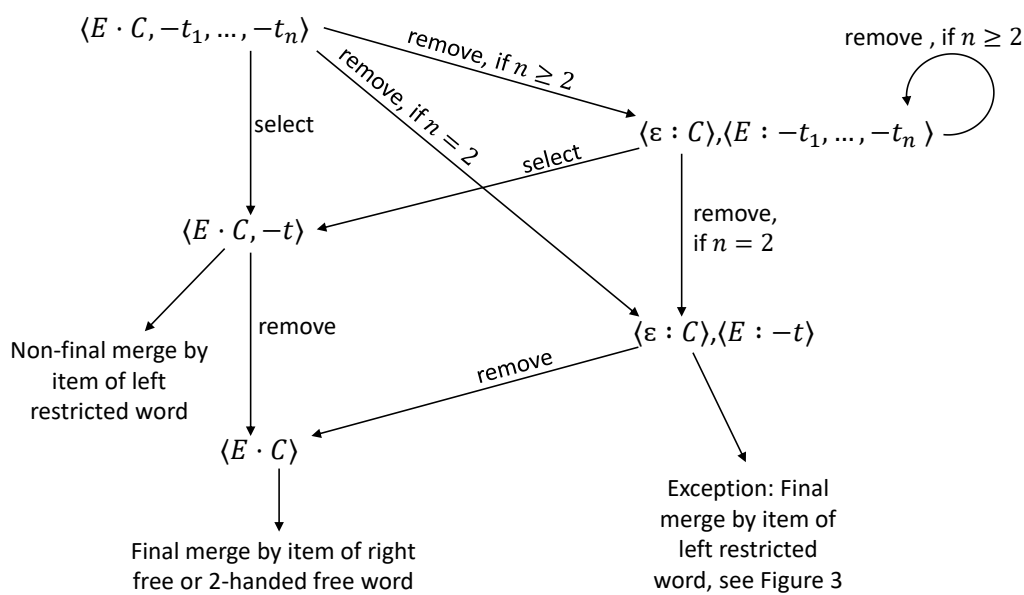}
\caption{The possible legal paths of derivation from an item of shape $\langle E\cdot \mathcal{C}, -t_1,\dots,-t_n\rangle$ or $\langle \epsilon :\mathcal{C}\rangle, \langle E:-t_1,\dots,-t_n\rangle$. Each arrow represents a merge- and potential consecutive move-operation(s) performed on the item that the arrow goes out of. The markers remove/select indicate that the arrow's merge- and move-operation(s) are triggered by a remove/select adapter.}
\label{pathsofadaptedderivation}
\end{figure}
Next, we take a deeper look into the possible shapes of word mergers, i.e. what does a $m_i^{[q_i]}$ do in the derivation path of Equation~\ref{derivoflexmergee}, if $m_i$ is (a self-derivation of) a word item. We show that
\begin{itemize}
    \item the operations triggered by $m_i$ always run analogously as in the scenarios described in Subsection~\ref{basic-wordshapes} or Figure~\ref{fig:vierundzwanzig}.
    \item the resulting product $m_i^{[q_i]}\circ\ldots\circ m_1^{[q_1]}(x)$ always has the shape $\langle\ldots\cdot \mathcal{C}_m,\ldots\rangle$.
\end{itemize}

Recall that (self-triggered derivations of) word items can only merge mergees of the shapes $\langle E\cdot \mathcal{C}\rangle$ or $\langle E\cdot \mathcal{C},-t\rangle$ or $\langle\epsilon :\mathcal{C}\rangle ,\langle E: -t\rangle$.
Consider a merger $m$ of category $\mathcal{C}_m$. It must hold ${=}\mathcal{C}$ or ${=}\mathcal{C},+t$ as its foremost selector-licensor block. $m$'s lexical head word $\overline{m}$ must also hold this block.

If the selector ${=}\mathcal{C}$ that triggers the merge with $x$ is $\overline{m}$'s foremost feature, then - assuming $m$ would be a non-trivial derivation of $\overline{m}$ - $m$ would not hold ${=}\mathcal{C}$, because this selector would have been triggered before. Then $m=\overline{m}$, so $m$ is a lexical word item. By Instruction 5 (or Observation~\ref{tab:MGitemtypes}) it is either right-free, 2-handed-free or left-restricted, so $m$ has the shape $\langle M::{=}\mathcal{C},\mathcal{C}_m,\ldots\rangle$ or $\langle M::{=}\mathcal{C},{=}\mathcal{C}_2,\mathcal{C}_m,\ldots\rangle$ or $\langle M::{=}\mathcal{C},+t,\mathcal{C}_m,\ldots\rangle$ respectively.

If ${=}\mathcal{C}$ is not $\overline{m}$'s foremost feature, then by Observation~\ref{tab:MGitemtypes} $\overline{m}$ must have 2 selector-licensor blocks and ${=}\mathcal{C}$ must be the second. Then $\overline{m}=\langle\overline{M}::{=}\mathcal{C}_1,{=}\mathcal{C},\mathcal{C}_m,\ldots\rangle$ and $m=\langle M:{=}\mathcal{C},\mathcal{C}_m,\ldots\rangle$.

The remaining possibilities of (merger,mergee)-pairs - when the merger is a (self-triggered derivation of a) word item - is listed in Table~\ref{tab:whenawordmerges} and each of the pairs lead to one the scenarios described in either Subsection~\ref{basic-wordshapes} or Figure~\ref{fig:vierundzwanzig}.

\begin{table}[ht]
    \centering
    \begin{tabular}{|l|l|m{2.3cm}|l|} \hline
        Merger & Mergee & Operation(s) & Resulting item \\ \hline
        $\langle M::{=}\mathcal{C},\mathcal{C}_m,\ldots\rangle$ & $\langle E\cdot \mathcal{C}\rangle$ & \makecell[l]{merge-1, \\see Eq.~\ref{merge:rf}} & $\langle ME:\mathcal{C}_m,\ldots\rangle$\\ \hline
        $\langle M::{=}\mathcal{C},{=}\mathcal{C}_2,\mathcal{C}_m,\ldots\rangle$ & $\langle E\cdot \mathcal{C}\rangle$ & \makecell[l]{merge-1, \\see Eq.~\ref{merge:2hf1}} & $\langle ME:{=}\mathcal{C}_2,\mathcal{C}_m,\ldots\rangle$ \\ \hline
        $\langle M:{=}\mathcal{C},\mathcal{C}_m,\ldots\rangle$ & $\langle E\cdot \mathcal{C}\rangle$ & \makecell[l]{merge-2, \\see Eq.~\ref{merge:2hf2}} & $\langle EM:\mathcal{C}_m,\ldots\rangle$\\ \hline
        $\langle M::{=}\mathcal{C},+t,\mathcal{C}_m,\ldots\rangle$ & $\langle E\cdot \mathcal{C},-t\rangle$ & merge-3 +move-1, see Eq.~\ref{merge:lr} + \ref{move:lr} & $\langle EM:\mathcal{C}_m,\ldots\rangle$\\ \hline
        $\langle M::{=}\mathcal{C},+t,\mathcal{C}_m,\ldots\rangle$ & $\langle\epsilon :\mathcal{C}\rangle ,\langle E: -t\rangle$) & merge-1 +move-1, see Figure~\ref{fig:vierundzwanzig} & $\langle EM:\mathcal{C}_m,\ldots\rangle$\\ \hline
    \end{tabular}
    \caption{All possible scenarios, how (a self-triggered derivation of) a word can merge in $G$. As can be seen, each scenario equals one of the scenarios described in Subsection~\ref{basic-wordshapes} or Figure~\ref{fig:vierundzwanzig}}
    \label{tab:whenawordmerges}
\end{table}

Unless the merger $m_i$ is $\langle M::{=}\mathcal{C},{=}\mathcal{C}_2,\mathcal{C}_m,\ldots\rangle$, it only has 1 selector and the outcome of merging a mergee $\hat{x}$ is always
\begin{align*}
    m_i^{[q_i]}(\hat{x})=m_i(\hat{x})=\langle \ldots:\mathcal{C}_m,\ldots\rangle.
\end{align*}

If the merger $m_i$ is $\langle M::{=}\mathcal{C},{=}\mathcal{C}_2,\mathcal{C}_m,\ldots\rangle$, it is the only case where it has 2 selectors and thus has a co-mergee $q_i$. The co-mergee $q_i$ must be merged by $m_i(\hat{x})=\langle ME:{=}\mathcal{C}_2,\mathcal{C}_m,\ldots\rangle$. So by Theorem~\ref{merge-criteria} $q_i$ must have the shape $\langle Q\cdot \mathcal{C}_2\rangle$. A merge-2 would be triggered, which would produce
\begin{align*}
    m_i^{[q_i]}(\hat{x})=m_i(\hat{x})(q_i)=\langle QME:\mathcal{C}_m,\ldots\rangle.
\end{align*}

In either case $m_i^{[q_i]}(\hat{x})$ has the shape $\langle\ldots:\mathcal{C}_m,\ldots\rangle$. Hence, all further derivations can again be fully described by the graph in Figure~\ref{pathsofadaptedderivation}, until a derivation has reached one of the mergee shapes of Table~\ref{tab:whenawordmerges}. Then the derivation can again be merged by (a self-derivation of) a word item, but again the following derivation would not deviate from the scenarios described in Subsection~\ref{basic-wordshapes} or Figure~\ref{fig:vierundzwanzig}.

Hence, there is no way for a derivation in a legal tree to escape the graph in Figure~\ref{pathsofadaptedderivation} and Table~\ref{tab:whenawordmerges}, so all merges and moves triggered by word items happen just as described in Subsection~\ref{basic-wordshapes}. So, in any leaf-to-root path of any derivation tree in $G$, each operation that affects the exponent is analogous to some derivation in $I$, where some NT is replaced by some word. No arrangement errors can occur, so the expression of the legal derivation tree in $G$ is also generatable by the CFG $I$. 
\end{proof}
\end{document}